\documentclass{article}

\PassOptionsToPackage{numbers}{natbib}
\usepackage[main, final]{neurips_2026}

\usepackage[utf8]{inputenc} %
\usepackage[T1]{fontenc}    %
\usepackage{hyperref}       %
\usepackage{url}            %
\usepackage{booktabs}       %
\usepackage{amsfonts}       %
\usepackage{nicefrac}       %
\usepackage{microtype}      %
\usepackage{xcolor}         %
\usepackage{amsmath}
\usepackage{amssymb}
\usepackage{tikz}
\usetikzlibrary{arrows.meta, positioning, calc}
\usepackage{multirow}
\usepackage{colortbl}
\usepackage{makecell}
\usepackage{float}
\usepackage{caption}
\usepackage{mathtools}
\usepackage{wrapfig}
\usepackage{algorithm}
\usepackage{algpseudocode}

\algrenewcommand{\algorithmiccomment}[1]{\hfill\makebox[6cm][l]{$\triangleright$ #1}}

\usepackage{graphicx}
\usepackage{enumitem}

\renewcommand{\vec}[1]{\boldsymbol{#1}}
\newcommand{\mat}[1]{\boldsymbol{#1}}

\newcommand{\ourmethod}{MOSEL}
\newcommand{\obj}{\mathcal{L}}

\usepackage{amsthm}
\newtheorem{theorem}{Theorem}
\theoremstyle{remark}
\newtheorem{remark}{Remark}
\theoremstyle{proposition}
\newtheorem{proposition}{Proposition}

\title{Learning Pareto Stationary Fronts \\ via Single-Pass Backpropagation}

\author{%
  Elina Rojin Celik \\
  Department of Computer Science\\
  Saarland University, Germany \\
  \texttt{celik@cs.uni-saarland.de} \\
  \And
   Marcos Medeiros Raimundo \\
   Instituto de Computa\c{c}\~{a}o \\
   Universidade Estadual de Campinas, Brazil\\
   \texttt{mraimundo@ic.unicamp.br} \\
   \AND
   Isabel Valera \\
   Department of Computer Science\\
   Saarland University, Germany \\
   \texttt{ivalera@cs.uni-saarland.de} \\
}

\begin{document}

\maketitle

\begin{abstract}
We propose \ourmethod\ (Multi-Objective Stackelberg Efficient Learning), a framework for a posteriori multi-objective optimization (MOO) in deep neural networks that recovers a full front of Pareto stationary solutions at the computational cost of standard single-objective training.
MOSEL reformulates the problem as a bilevel optimization problem that leverages network modularity to decouple representation learning from objective-preference alignment. Casting the bilevel problem as a Stackelberg game enables solving the original a posteriori MOO problem in a single forward–backward pass. %
As a result, \ourmethod\ matches the time and memory efficiency of standard single-objective training while enabling scalable Pareto stationary front learning. %
Empirically, \ourmethod\ uncovers diverse and optimal Pareto frontiers in strongly conflicting settings (e.g., fairness–accuracy). Remarkably, even in weakly conflicting regimes such as multi-task learning, it consistently converges to solutions closer to the utopia point, outperforming both standard single-objective training and specialized multi-task learning methods. 
These results highlight the broader potential of a posteriori MOO learning as a pathway to efficiently learn more diverse and robust representations, ultimately improving generalization.
\end{abstract}

\section{Introduction}
Multi-objective optimization (MOO) is ubiquitous across deep learning applications, arising whenever models must balance multiple objectives (i.e., losses during training). 
Many such objectives stem from competing stakeholder demands, leading to \emph{strongly conflicting objectives}. For example,  in \emph{fairness}, individual and group fairness must be traded-off against predictive performance~\cite{padh2021addressing}; and,  in \emph{LLM alignment}, objectives such as helpfulness, harmlessness, and honesty can conflict~\cite{he2025pareto, li2025gradient, pmlr-v286-ren25a}.
Similar conflicts emerge from the distributed or sequential nature of learning: in \emph{federated learning} each client optimizes a local objective that may conflict with those of other clients~\cite{yang2023federated};  and in \emph{reinforcement learning}, agents must balance multiple reward signals simultaneously~\cite{hayes2022practical, liu2025pareto, pmlr-v235-zhou24h}.  %
Other MOO applications lead to \emph{weak conflicts} among objectives where,  while the objectives may not be directly in conflict, the local nature of neural network optimization may lead to suboptimal solutions. 
For example,   in \emph{multi-task learning} (MTL)  joint training can improve performance on individual tasks compared to training them independently (positive transfer);  yet, task-specific losses may conflict during optimization,  %
degrading overall performance~\cite{liu2021conflict,sener2018multi,  yu2020gradient}. 
Similar weak conflicts arise in \emph{multimodal learning}%
~\cite{javaloy2022impartial, wei2024mmpareto}, and in \emph{domain generalization}%
~\cite{chen2023pareto}, often leading to suboptimal neural networks that fail to capture diversity across modalities and domains.

In practice, multiple objectives are often aggregated into a single scalar loss, a strategy that simplifies training but obscures trade-offs and can lead to suboptimal solutions. In particular, collapsing objectives into a single loss:
(i) in strongly conflicting regimes, fails to capture the full spectrum of Pareto optimal solutions, thereby masking the trade-offs among objectives~\cite{hayes2022practical, roijers2013survey}; or, 
(ii) in weakly conflicting regimes, requires manual weighting that is problem-dependent and sensitive to factors such as loss magnitudes~\cite{chen2018gradnorm}. 
Importantly, such \emph{a priori} scalarization tends to bias optimization toward dominant objectives, e.g., prioritizing performance over fairness~\cite{pmlr-v119-martinez20a, zafar2017fairness} or favoring certain tasks over others~\cite{sener2018multi, yu2020gradient}.  
Consequently, it can lead to misleading interpretations of the underlying trade-offs and, more importantly, hinder the optimizer from reaching solutions close to the utopia point, thereby limiting overall performance. 
A principled solution to these issues is \emph{a posteriori} MOO, and in particular, Pareto front learning (PFL). However, to the best of our knowledge, \emph{there is currently no  PFL approach that is suitable for training deep neural networks at scale, while providing Pareto stationarity guarantees}. %

In this paper, we propose MOSEL (Multi-Objective Stackelberg Efficient Learning), the first framework for PFL in deep neural networks that recovers Pareto stationary solutions at scale. 
To that end, we exploit the modular structure of neural networks to decouple  representation learning from objective preference alignment, enabling efficient MOO training without sacrificing scalability. 
Specifically, we introduce a framework for PFL that reformulates the problem as a bilevel optimization, which we then cast into  a Stackelberg game, allowing the original problem to be solved efficiently with a single forward–backward pass of backpropagation. As a result, \emph{\ourmethod~ achieves Pareto stationary front learning with time and memory complexity comparable to conventional single-objective training}.

Beyond efficiency, we demonstrate that explicitly modeling multiple objectives allows \ourmethod\ to: (i) recover diverse and high-quality Pareto frontiers in strongly conflicting settings;  and, (ii)  converge to solutions closer to the utopia solution in weakly conflicting regimes such as  MTL, outperforming both single-objective baselines and specialized multi-task methods.
These findings suggest that \emph{efficient multi-objective learning can serve as a foundation for learning richer, more diverse representations, potentially improving robustness and generalization in deep neural networks.}

\section{Background and Related Work}

In this section, we introduce the key concepts from multi-objective and bilevel optimization that underpin our approach, and provide an overview of the related literature. 

\subsection{Multi-objective optimization.} 
In deep learning, MOO naturally arises when training a model with multiple loss functions. Let $\vec{\theta} \in \Theta \subset \mathbb{R}^n$ denote the parameters of a neural network, and consider $m$ non-negative, differentiable loss functions $\obj_j: \Theta \to \mathbb{R}_+$, which may correspond to task-specific losses in multi-task learning or stakeholder-defined losses (e.g., fairness objectives). The training problem can then be formulated as a MOO problem~\cite{ehrgott2005multicriteria, miettinen1999nonlinear}:
\begin{equation}
\label{eq:moo}
\min_{\vec{\theta} \in \Theta} ~ \obj(\vec{\theta}) = \min_{\vec{\theta} \in \Theta} ~ (\obj_1(\vec{\theta}), \ldots, \obj_m(\vec{\theta})).
\end{equation}

In MOO, the objectives are typically conflicting, meaning that no single parameter configuration simultaneously minimizes all losses. To formalize optimal trade-offs,~\cite{miettinen1999nonlinear} defines a dominance relation: a solution $\vec{\theta}_1 \in \Theta$ is said to be dominated by $\vec{\theta}_2 \in \Theta$, denoted by $\obj(\vec{\theta}_2) \prec \obj(\vec{\theta}_1)$, if $\vec{\theta}_2$ improves at least one objective without worsening any other. A point $\vec{\theta}^\star \in \Theta$ is called Pareto optimal (PO) if no other point in $\Theta$ dominates it. Given the non-convexity of neural network loss landscapes, we focus on local notions of optimality.

\textbf{Optimality. } The set of local Pareto optimal solutions~\cite{miettinen1999nonlinear} is defined as
\begin{equation}
    \mathcal{P} := \{\vec{\theta}^\star \in \Theta \mid \exists \epsilon > 0 : \forall \vec{\theta} \in \mathcal{N}_\epsilon(\vec{\theta}^\star) , \obj(\vec{\theta}) \nprec \obj(\vec{\theta}^\star) \}
\end{equation}
where $\mathcal{N}_\epsilon(\vec{\theta}^\star) = \{\vec{\theta}\in \Theta~|~\|\vec{\theta}-\vec{\theta}^\star\|<\epsilon\}$. The image set $\obj(\mathcal{P})$ is referred to as the local Pareto front (PF), which may consist of multiple or even infinitely many solutions. 

In differentiable settings, a point $\vec{\theta}^\star \in \Theta$ is Pareto stationary if there exists a convex combination of the gradients of the losses that vanishes. Formally, $\vec{\theta}^\star$ is Pareto stationary if
\begin{equation}
\label{eq:stationarity}
\exists\ \vec{r} \in \mathbb{R}^m, r_j \geq 0, \sum r_j = 1:~ \sum_{j=1}^m r_j\nabla \obj_j(\vec{\theta}^\star) = \mat{J}_\obj(\vec{\theta}^\star)^T\vec{r} = \vec{0},
\end{equation}
where $\mat{J}_\obj(\vec{\theta}^\star)$ denotes the Jacobian of the vector-valued objective. Pareto stationarity is a necessary condition for local Pareto optimality, and becomes sufficient under convexity assumptions~\cite{desideri2012multiple, fliege2000steepest}.

\textbf{Scalarization. } 
A natural approach to MOO is to reduce the vector-valued problem in Eq.~\eqref{eq:moo} to a single surrogate objective via \emph{scalarization}, which encodes a preference over the objectives through a vector $\vec{r}$~\cite{miettinen1999nonlinear}.  Two common scalarization functions that we will explore in this work include linear scalarization (LS)~\cite{miettinen1999nonlinear}, which translates to a weighted sum of the objectives as $\sum_{i=1}^m r_i\obj_i(\vec{\theta})$ and smooth Chebychev scalarization (CS)~\cite{lin2024smooth}, which is defined as $\frac{1}{\tau} \log \sum_{i=1}^{m} \exp\left( \tau\cdot r_i \mid \mathcal{L}_i(\vec{\theta}) - u_i^\star \mid \right)$ with smoothing parameter $\tau$ and \emph{utopia point} $\vec{u}^\star$, where each $u_i^\star$ is the ideal objective value of $\obj_i(\vec{\theta})$ attainable in isolation. 
In the presence of conflicting objectives, this point is generally infeasible. 
While linear scalarization is simple and efficient, it can only recover solutions on convex parts of the Pareto front, Chebychev scalarization on the other hand does not suffer from this limitation~\cite{miettinen1999nonlinear}. 

\textbf{Related works. } 
MOO methods can be categorized based on when preferences are specified~\cite{ liu2024stochastic, miettinen1999nonlinear}. \emph{No-preference} approaches resolve competing gradients without user input~\cite{desideri2012multiple, sener2018multi}. \emph{A priori} methods compute a single Pareto stationary solution for a fixed preference, typically via scalarization~\cite{lin2024smooth} or preference-aware gradient balancing~\cite{mahapatra2020multi}, but require retraining for new preferences. \emph{A posteriori} methods instead generate a set of trade-off solutions, e.g., by partitioning the objective space~\cite{lin2019pareto}, tracing Pareto sets~\cite{pmlr-v119-ma20a}, or sampling preferences for coverage~\cite{chen2022multi}. \emph{Interactive} methods iteratively refine solutions with user feedback~\cite{miettinen1999nonlinear}, but remain costly in deep learning, although approaches such as RLHF~\cite{christiano2017deep} and LLM alignment~\cite{ouyang2022training} can be seen as instances of this paradigm.

A key limitation of both \emph{a priori} and finite-solution \emph{a posteriori} methods is their inability to generalize to unseen preferences without retraining. \emph{Pareto Front Learning} (PFL) addresses this by learning a mapping from preferences to Pareto optimal solutions, aiming for (i) \emph{optimality} (i.e., solutions lying on the Pareto front) and (ii) \emph{diversity} (i.e., coverage of the full front)~\cite{coello2007evolutionary, hoang2022improving}. Pareto Hypernetworks~\cite{navon2021learning} achieve this but incur high computational and parameter costs due to per-objective gradients and auxiliary networks. COSMOS~\cite{ruchte2021scalable} improves scalability with a single training run and minimal overhead, but lacks guarantees of Pareto stationarity, as shown in Appendix~\ref{app:proofs}. 

In contrast to prior work, \emph{\ourmethod\ combines the strengths of \emph{a posteriori} methods and Pareto front learning while avoiding their key limitations, achieving Pareto stationary solutions without incurring significant computational or parameter overhead.}

\subsection{Bilevel optimization} A bilevel optimization problem introduces a hierarchy between two connected optimization problems:  %
\begin{equation}
\min_{\vec{\phi} \in \Phi} \quad  g(\vec{\phi}, \vec{\theta}) \quad \text{s.t.} \quad \vec{\theta} \in \operatorname*{arg\,min}_{\vec{\theta}' \in \Theta} f(\vec{\phi}, \vec{\theta}') 
\end{equation}
where $\Phi \subseteq \mathbb{R}^p$ and $\Theta \subseteq \mathbb{R}^q$ denote the feasible decision spaces for the upper and lower level problems; and $g$ and $f$ denote the upper and lower level objectives respectively~\cite{colson2007overview, dempe2002foundations}. The bilevel formulation provides a more natural way to describe an objective that inherently depends on the outcome of a second objective~\cite{sinha2017review}, allowing for more flexibility in the optimization process by enabling separate optimization for each level.

\textbf{Related works. } 
Previous works~\cite{chen2022multi, yazdani2024fair} have explored bilevel formulations of MOO. Rather than fixing preference vectors a priori,~\cite{chen2022multi} optimizes a distribution over a finite set of preferences in the upper level with respect to a quality metric (e.g., hypervolume) computed in the lower level. While this yields well-distributed solutions, the number of Pareto points remains fixed and limited by the upper-level design. 
In contrast,~\cite{yazdani2024fair} propose a bilevel formulation with accuracy as the upper-level and fairness as the lower level objective. However, scaling this approach beyond two objectives is non-trivial, as each additional objective requires introducing additional hierarchical levels.

In contrast to prior work, \emph{\ourmethod\  leverages bilevel optimization to efficiently learn Pareto stationary fronts without restricting the number of solutions or objectives.}  
\section{From \emph{a posteriori} MOO to bilevel optimization of neural networks}
\label{sec:bilevel}
In this section, we cast the \emph{a posteriori} MOO problem, or more specifically, the Pareto stationary front learning of deep neural networks into a bilevel optimization problem. 
To that end, we first assume a \textbf{scalarization function}, of the form $s: \mathbb{R}^m \times {[0,1]}^m \mapsto \mathbb{R}$, to aggregate the vector of loss functions (i.e., objectives) $\obj  = (\obj_1, \ldots, \obj_m)$ into a single scalar value, according to a given preference vector $\vec{r}\in [0,1]^m$.  In our case,  we consider linear scalarization $\sum_{i=1}^m r_i\obj_i(\vec{\theta})$~\cite{miettinen1999nonlinear} and smooth Tchebycheff scalarization $\frac{1}{\tau} \log \sum_{i=1}^{m} \exp\left( \tau \cdot r_i \mid \mathcal{L}_i(\vec{\theta}) - u_i^\star \mid \right)$~\cite{lin2024smooth}. %

Second, we leverage the modularity of neural networks to decouples the learning of optimal (i.e., Pareto stationary)  from diverse (i.e., different trade-offs among objectives) solutions, as illustrated in Figure~\ref{fig:method} (a).  
To this end, we partition the network into: %
(i)  a \textbf{representation learning network} $f_\theta$, which seeks representations steering the Pareto front toward the utopia point, effectively guiding solutions toward \emph{optimality}; and (ii) a \textbf{preference alignment network} $g_{\phi}$,  which focuses on maintaining \emph{diversity} of the solutions across the Pareto front.
By separating optimality and diversity within the neural network, \ourmethod\ mitigates adversarial interactions between competing goals, yielding a more stable optimization process. In Section~\ref{sec:MOSEL}, we formalize this intuition by showing that stochastic preference sampling implicitly regularizes the representation network toward shared features, exponentially suppressing objective-specific ones. Importantly, this decomposition comes at almost no additional memory or computational cost, as it simply requires the incorporation of a preference vector as input to the preference alignment network and partitioning the neural network computational graph during training. 

Figure~\ref{fig:method} (b) illustrates this architectural design using the ResNet-34 architecture~\cite{he2015deep}. The network is divided into two parts: a representation learning network, comprising the convolutional stem and the first three residual layers, and a preference alignment network consisting of the fourth residual layer followed by the fully connected head. We leverage the intermediate representation after the third residual layer, denoted by $\vec{z}$, as the conditioning point for preference injection. 
As highlighted in Figure~\ref{fig:method} (b), the only modification to the original architecture is a Feature-wise Linear Modulation (FiLM) layer~\cite{perez2017visual} inserted at the split point, which modulates $\vec{z}$ channel-wise based on the preference vector $\vec{r}$, steering the shared representation towards the desired trade-off while preserving its spatial structure.  The resulting conditioned representation $\vec{z}'$ is subsequently passed through the remaining network. This design introduces only negligible overhead relative to the base network.

\begin{figure}[t]
    \centering
    \includegraphics[width=\linewidth]{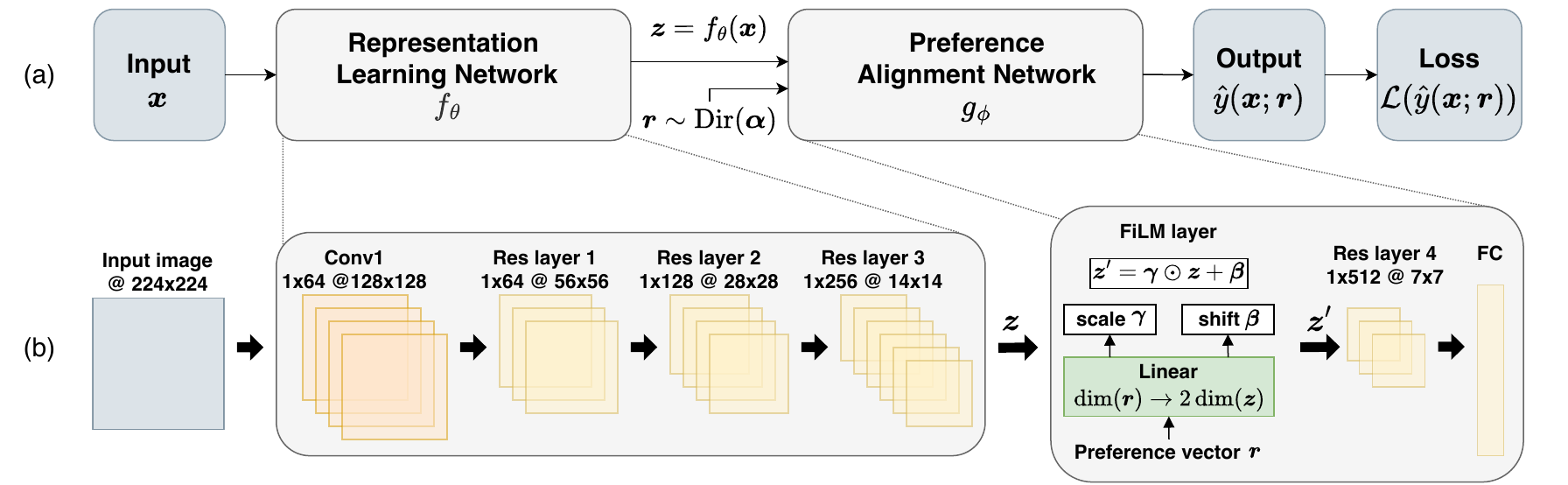}
    \caption{\textbf{Overview of \ourmethod.} (a) It partitions the neural network into a representation learning network, steering $\vec{z}$ toward Pareto optimal solutions, and a preference alignment network conditioning on $\vec{r}\sim \text{Dir}(\vec{\alpha})$ to cover diverse trade-offs. (b) Illustration on ResNet-34~\cite{he2015deep}: $\vec{z}$ is extracted at the network split point and modulated via a FiLM layer~\cite{perez2017visual}, which maps $\vec{r}$ to a scaling and shift factor per feature channel, steering the final predictions $\hat{y}(\vec{x};\vec{r})$ toward the desired objective trade-off.} 
    \label{fig:method} \vspace{-8pt}
\end{figure}
Third, we can now reformulate the original \emph{a posteriori} MOO problem from Eq.~\eqref{eq:moo} as the following \textbf{bilevel optimization problem}:
\begin{equation}
    \label{eq:bilevel}
    \begin{aligned}
        & \text{(leader) \hfill} & \underset{\vec{\phi}}{\text{arg min }}~  s(\obj(\vec{\theta}^\star, \vec{\phi}(\vec{r})); \vec{r}) \\
         &\text{(follower) \hfill} & s.t.~~ \vec{\theta}^\star \in \underset{\theta}{\text{arg min }}~ ~ s(\obj(\vec{\theta}, \vec{\phi}(\vec{r})); \vec{r})
    \end{aligned} 
    ~~~, \quad  \forall \vec{r}\sim Dir(\vec{\alpha}) , 
\end{equation}
where the preference vectors $\vec{r}$ are sampled from a Dirichlet distribution with hyperparameter $\vec{\alpha}$ and $s$ denotes the scalarization function.

In the context of Pareto front learning, the bilevel perspective provides a natural decomposition between optimality and diversity: the lower level ensures Pareto stationary solutions, while the upper level governs preference alignment and coverage of the front. We interpret the lower level optimization problem as the \emph{follower}, which optimizes the parameters of the representation network $\vec{\theta}$, and the upper level problem as the \emph{leader}, which optimizes the parameters of the preference alignment network $\vec{\phi}$.
We adopt this leader-follower perspective in Section~\ref{sec:MOSEL}, where we cast the bilevel problem as a Stackelberg game. By combining this interpretation with the modular structure of neural networks, we enable efficient single-pass training that recovers diverse, Pareto stationary solutions without restricting the number of objectives or attainable trade-offs. 

Before introducing the  Stackelberg interpretation, we first show that any solution to Eq.~\eqref{eq:bilevel} is a Pareto stationary solution of the original \emph{a posteriori} MOO problem in Eq.~\eqref{eq:moo}.
We assume that the scalarization function $s$ %
satisfies the following properties:

    (i) $s(\obj(\vec{\theta}, \vec{\phi}(\vec{r})); \vec{r})$ is continuously differentiable in $\vec{\theta}$ and $\vec{\phi}$\; and $s(\cdot,\vec{r})$ in $\obj$; and \\
    (ii) $s(\obj(\vec{\theta}, \vec{\phi}(\vec{r})); \vec{r})$ is strictly increasing in each objective $\obj_i$ with $\partial s/\partial \obj_i>0$ for all $i=1, \ldots, m$. 
Note that both linear scalarization and smooth Tchebycheff scalarization satisfy these conditions, since their partial derivatives w.r.t. each objective are strictly positive whenever $r_i > 0$.

\begin{theorem}[Pareto stationarity]
\label{thm:stationarity}
Let $s$ be a scalarization function satisfying properties~(i) and~(ii), and let $(\vec{\phi}^\star, \vec{\theta}^\star)$ be a solution to the bilevel problem in Eq.~\eqref{eq:bilevel}. Then $(\vec{\phi}^\star, \vec{\theta}^\star)$ is a Pareto stationary solution of the MOO problem in Eq.~\eqref{eq:moo}.

\proof
Let $\vec{\phi}^\star$ and $\vec{\theta}^\star$ denote solutions to the upper and lower level optimization problems, respectively. 
Define weights $\omega_i(\vec{r}) := \partial s/\partial \obj_i$ with partial derivatives evaluated at $\obj(\vec{\theta}^\star, \vec{\phi}^\star(\vec{r}))$ and normalize such that $\sum_{i=1}^{m} \omega_i(\vec{r}) = 1$. By property (ii), $\omega_i(\vec{r})>0$ so it lies in the probability simplex. 

Since $\vec{\theta}^\star$ is an unconstrained solution to the lower level problem, the first-order optimality condition together with property (i) gives 
\begin{equation}
    \label{eq:foc-theta}
    \nabla_{\vec{\theta}}\, s(\obj(\vec{\theta}^\star, \vec{\phi}^\star(\vec{r})); \vec{r}) = \left(\sum_{j=1}^m \frac{\partial s}{\partial \obj_j}\right) \sum_{i=1}^{m} \omega_i(\vec{r})\, \nabla_{\vec{\theta}} \obj_i(\vec{\theta}^\star, \vec{\phi}^\star(\vec{r})) = \vec{0}.
\end{equation}

The upper level objective depends on $\phi$ both directly and through the followers best response $\vec{\theta}^\star (\phi)$. Using the first-order optimality condition and Eq.~\eqref{eq:foc-theta} its total derivative is  
\begin{align}
    \label{eq:foc-phi}
    \begin{split}
        \frac{d~s(\obj(\vec{\theta}^\star, \vec{\phi}^\star(\vec{r})); \vec{r})}{d\vec{\phi}} & = \nabla_{\vec{\phi}} s(\obj(\vec{\theta}^\star, \vec{\phi}^\star(\vec{r})); \vec{r}) + \left(\frac{\partial\vec{\theta}^\star}{\partial\phi}\right)^\top \nabla_{\vec{\theta}} s(\obj(\vec{\theta}^\star, \vec{\phi}^\star(\vec{r})); \vec{r}) \\
        & = \left(\sum_{j=1}^m \frac{\partial s}{\partial \obj_j}\right) \sum_{i=1}^{m} \omega_i(\vec{r})\, \nabla_{\vec{\phi}} \obj_i(\vec{\theta}^\star, \vec{\phi}^\star(\vec{r})) = \vec{0}.
    \end{split}
\end{align}
Since $\sum_j \partial s/\partial \obj_i$ is strictly positive, we can divide it out of both Eq.~\eqref{eq:foc-theta} and~\eqref{eq:foc-phi}. Since both equations contain the same weights we can combine them to expresses $\vec{0}$ as a convex combination of the per-objective gradients at $(\vec{\phi}^\star, \vec{\theta}^\star)$, which is precisely the Pareto stationarity condition in Eq.~\eqref{eq:stationarity}. Thus, any solution $(\vec{\phi}^\star, \vec{\theta}^\star)$ to the bilevel optimization problem in Eq.~\eqref{eq:bilevel} is Pareto stationary. ~ $\blacksquare$
\end{theorem}

\section{MOSEL: Multi-Objective Stackelberg Efficient Learning}\label{sec:MOSEL}

In this section, we introduce Multi-Objective Stackelberg Efficient Learning (\ourmethod), a framework for \emph{a posteriori} multi-objective optimization in deep neural networks that recovers a full front of Pareto stationary solutions at the computational cost of standard single-objective training.
To that end, we build on recent works that leverage Stackelberg game theory for deep learning optimization---such as gradient homogenization in multi-task learning~\cite{javaloy2022rotograd} and fairness–accuracy trade-offs~\cite{yazdani2024fair}---to cast the bilevel optimization problem in Eq.~\eqref{eq:bilevel} to a Stackelberg game. 
Following the notion of a differentiable Stackelberg equilibrium, introduced in~\cite{pmlr-v119-fiez20a}, we train both the representation learning and preference alignment networks via simultaneous gradient updates---in other words, via a single forward-backward pass.

A general \textbf{Stackelberg game} is a two-player game consisting of a leader and follower, where the leader plays first and the follower reacts optimally to this move, representing the hierarchical structure of bilevel optimization problems. 
As stated by~\citet{pmlr-v119-fiez20a}, given the objectives of leader and follower $g, f : \Phi \times \Theta \to \mathbb{R}$, a strategy $(\vec{\phi}^\star, \vec{\theta}^\star)$ is a Stackelberg equilibrium, if it minimizes %
\begin{equation}
\operatorname*{arg\,min}_{\vec{\phi} \in \Phi} \quad  g(\vec{\phi}, \vec{\theta}) \quad \text{s.t.} \quad \vec{\theta} \in \operatorname*{arg\,min}_{\vec{\theta}' \in \Theta} f(\vec{\phi}, \vec{\theta}'). 
\end{equation}

As we operate in non-convex settings, we consider the corresponding local formulation, where the strategy $\vec{\phi}^\star \in U_\phi$ is a local Stackelberg solution for the leader if, for all $\vec{\phi} \in U_\phi$,
\begin{equation}
    \sup_{\vec{\theta} \in R_{U_\theta}(\vec{\phi}^\star)} g(\vec{\phi}^\star, \vec{\theta}) 
    \;\leq\; 
    \sup_{\vec{\theta} \in R_{U_\theta}(\vec{\phi})} g(\vec{\phi}, \vec{\theta}),
\end{equation}
where $ R_{U_\theta}(\vec{\phi}) = \bigl\{\vec{\theta} \in U_\theta \mid f(\vec{\phi}, \vec{\theta}) \leq f(\vec{\phi}, \vec{\theta}'),\, \forall \vec{\theta}' \in U_\theta\bigr\}$ is the follower's local best-response. Moreover, for any $\vec{\theta}^\star \in R_{U_\theta}(\vec{\phi}^\star)$, $(\vec{\phi}^\star, \vec{\theta}^\star)$  is a local Stackelberg equilibrium on $U_\phi \times U_\theta$.
However, the exact computation of $R_{U_\theta}(\vec{\phi})$ is intractable in neural networks. 

To overcome this limitation,~\citet{pmlr-v119-fiez20a} introduce the notion of \textbf{differentiable Stackelberg equilibrium}, where $(\vec{\phi}^\star, \vec{\theta}^\star) = (\vec{\phi}^\star, b(\vec{\phi}^\star))$ is a differentiable Stackelberg equilibrium, if  
\begin{equation}
    \nabla_{\vec{\phi}}\, g(\vec{\phi}^\star, b(\vec{\phi}^\star)) = 0 \quad
    \text{and} \quad
    \nabla^2_{\vec{\phi}}\, g(\vec{\phi}^\star, b(\vec{\phi}^\star)) \succ 0,
\end{equation}
where $\vec{\theta}^\star = b(\vec{\phi}^\star)$ is the follower's best response that is implicitly defined by $\nabla_{\vec{\theta}} f(\vec{\phi}^\star, \vec{\theta}^\star) = 0$.
The concepts of local Stackelberg equilibrium and differentiable Stackelberg equilibrium coincide except on a set of measure zero~\cite{pmlr-v119-fiez20a}, making the differentiable Stackelberg equilibrium a natural target for gradient-based optimization. 
~\citet{pmlr-v119-fiez20a} further show, that simultaneous gradient updates with a timescale separation, that is, a smaller learning rate for the leader than for the follower, is sufficient to ensure local convergence to a differentiable Stackelberg equilibrium. 
Intuitively, the faster follower updates keep $\vec{\theta}$ close to its best response at every leader step, approximating the hierarchical structure of the Stackelberg game without requiring exact best-response computation.
\begin{algorithm}[t]
\caption{\ourmethod~ algorithm}\label{alg:alg}
\begin{algorithmic}
    \Require Dataset $\mathcal{D} = (x, y)$,  Dirichlet hyperparameter $\vec{\alpha}$,\\ ~~~~~~~~~learning rates $\eta_\phi$ and   $\eta_\theta$ \Comment{$\eta_\phi < \eta_\theta$}     
\For{$e$ epochs}
    \For{every batch $(x, y) \sim p_\mathcal{D}$}
        \State Sample $\vec{r} \sim \mathrm{Dir}(\vec{\alpha})$
            \Comment{1 preference vector per batch}\\
         \quad\quad (\textit{Forward Pass})
        \State $\hat{y} \leftarrow g_{\vec{\phi}}(f_{\vec{\theta}}(x),\, \vec{r})$ \Comment{Compute neural network outputs}
        \State $\ell := s(\obj(\vec{\theta}, \vec{\phi}(\vec{r})); \vec{r})$ \Comment{Evaluate losses on $(\hat{y},y)$ and scalarize}\\
         \quad\quad (\textit{Backward Pass})
        \State  $\vec{\phi} \leftarrow \vec{\phi} - \eta_\phi\, \nabla_{\vec{\phi}} \ell$ \Comment{Update preference alignment network}
        \State  $\vec{\theta} \leftarrow \vec{\theta} - \eta_\theta\, \nabla_{\vec{\theta}} \ell$ \Comment{Update representation learning network}
    \EndFor
\EndFor
 
\State \Return $\vec{\phi}, \vec{\theta}$
\end{algorithmic}
\end{algorithm}%
These guarantees directly connect to the Pareto stationarity condition in Eq.~\eqref{eq:stationarity}, as any differentiable Stackelberg equilibrium $(\vec{\phi}^\star, \vec{\theta}^\star)$ satisfy their first-order stationarity conditions, which coincide with the condition in Eq.~\eqref{eq:stationarity}. 
Thus, convergence to a differentiable Stackelberg equilibrium implies convergence to a Pareto stationary solution, providing the theoretical foundation for the \textbf{\ourmethod\ algorithm} in Alg.~\ref{alg:alg}. 
We highlight that {the output of Alg.~\ref{alg:alg} correspond to local optima of the neural network parameters $\vec{\phi}$ and $\vec{\theta}$, which in turn yield a Pareto stationary solution of the bilevel optimization problem in Eq.~\eqref{eq:bilevel}, and consequently of the original MOO problem in Eq.~\eqref{eq:moo}}.

Beyond Pareto stationarity, the stochastic preference sampling in Alg.~\ref{alg:alg} implicitly regularizes the representation network toward shared features. Under the power-iteration characterization of early-stage gradient descent~\citep{zhang2026neural}, 
we show in Theorem~\ref{thm:variance_penalty} in App.~\ref{app:regulariaztion} that objective-specific features are exponentially suppressed relative to shared ones, with strength controlled by the Dirichlet concentration $\vec\alpha$. The following proposition summarizes the main results.
\begin{proposition}[Implicit regularization toward shared features]
Consider the bilevel problem in Eq.~\eqref{eq:bilevel} with two objectives $A$ and $B$, solved via the Stackelberg formulation of Alg.~\ref{alg:alg}. Let $\mathcal{X}_A, \mathcal{X}_B \subseteq \mathcal{X}$ denote the feature directions in $f_\theta$ relevant to each objective and $\mathcal{X}_{AB} = \mathcal{X}_A \cap \mathcal{X}_B$ the shared features. In the weakly conflicting regime, i.e., when $\mathcal{X}_{AB} \neq \emptyset$, let $\rho_{k,m}$ denote the relative learning ratio, i.e., the fraction of representational energy allocated to feature $k$ over feature $m$, under the stochastic preference sampling of Alg.~\ref{alg:alg}. Then:
\begin{equation*}
    \forall~k \in \mathcal{X}_{AB},~m \in \mathcal{X}\setminus \mathcal{X}_{AB}~:~ \exists~\vec{\alpha} : \rho_{k,m} > 1.
\end{equation*}
\end{proposition}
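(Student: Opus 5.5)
The plan is to work in the linearized, early-training regime of the power-iteration characterization of \citep{zhang2026neural}. There, gradient descent on $f_\theta$ amplifies each feature direction $k$ multiplicatively, and the factor per step is set by the effective gradient signal that direction receives. We model the per-batch scalarized loss with preference $\vec r=(r,1-r)$, $r\sim\mathrm{Beta}(\alpha_A,\alpha_B)$ (the two-objective Dirichlet). The gradient signal on feature $k$ is then a preference-weighted sum of objective-specific strengths,
\begin{equation*}
\lambda_k(\vec r)= r\,\omega_A\,a_k + (1-r)\,\omega_B\,b_k ,
\end{equation*}
where $a_k>0$ iff $k\in\mathcal{X}_A$ and $b_k>0$ iff $k\in\mathcal{X}_B$. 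The weights $\omega_A,\omega_B$ are the normalized partial derivatives $\partial s/\partial\obj_i$ used in the proof of Theorem~\ref{thm:stationarity}. After $T$ steps with independent preference draws, the energy on direction $k$ behaves like $\prod_{t=1}^T(1+\eta_\theta\lambda_k(\vec r_t))^2$. The relative learning ratio is
\begin{equation*}
\rho_{k,m}=\mathbb{E}\Big[\prod_t (1+\eta_\theta\lambda_k(\vec r_t))^2\Big]\Big/\mathbb{E}\Big[\prod_t(1+\eta_\theta\lambda_m(\vec r_t))^2\Big],
\end{equation*}
or the ratio of the corresponding log-growth rates. Fixing this notion precisely is the first step, and it should match the definition used in Theorem~\ref{thm:variance_penalty}.

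Next, we take logarithms and apply a second-order expansion in $\eta_\theta$:
\begin{equation*}
\log\prod_t(1+\eta_\theta\lambda)\approx T\big(\eta_\theta\mathbb{E}[\lambda]-\tfrac{\eta_\theta^2}{2}\mathrm{Var}[\lambda]\big).
\end{equation*}
The variance term is the "variance penalty." For an objective-specific feature $m\in\mathcal{X}_A\setminus\mathcal{X}_B$ we have $\lambda_m=r\,\omega_A a_m$, so $\mathrm{Var}[\lambda_m]=\omega_A^2a_m^2\,\mathrm{Var}(r)$. For a shared feature $k$, $\lambda_k=\omega_B b_k + r(\omega_A a_k-\omega_B b_k)$, so its variance scales with $(\omega_Aa_k-\omega_Bb_k)^2\mathrm{Var}(r)$, which can be much smaller. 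Moreover, the mean for $k$ collects signal from both objectives. Hence the growth exponent of $k$ exceeds that of $m$ whenever
\begin{equation*}
\mathbb{E}[\lambda_k]-\mathbb{E}[\lambda_m] > \tfrac{\eta_\theta}{2}\big(\mathrm{Var}[\lambda_k]-\mathrm{Var}[\lambda_m]\big).
\end{equation*}
Exponentiating over $T$ steps then gives the exponential suppression claimed in Theorem~\ref{thm:variance_penalty}.

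The existence of $\vec\alpha$ then follows by choosing it so that this inequality holds, using $\mathbb{E}[r]=\alpha_A/(\alpha_A+\alpha_B)$ and $\mathrm{Var}(r)=\alpha_A\alpha_B/((\alpha_A+\alpha_B)^2(\alpha_A+\alpha_B+1))$. For $m\in\mathcal{X}_A\setminus\mathcal{X}_B$, we push $\mathbb{E}[r]$ toward the side that favors $B$, i.e.\ take $\alpha_B\gg\alpha_A$. Then $\mathbb{E}[\lambda_m]\to 0$, while $\mathbb{E}[\lambda_k]\to\omega_Bb_k>0$ because $k\in\mathcal{X}_B$. The symmetric choice handles $m\in\mathcal{X}_B\setminus\mathcal{X}_A$. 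Features outside $\mathcal{X}_A\cup\mathcal{X}_B$ have $\lambda_m=0$, so the ratio exceeds one for any $\vec\alpha$. Because the statement quantifies $\exists\vec\alpha$ after fixing $k,m$, a per-pair choice suffices. A uniform choice is possible alternatively: with symmetric large $\alpha$ the variance vanishes, and the comparison reduces to $\tfrac12(\omega_Aa_k+\omega_Bb_k)$ versus $\tfrac12\omega_Aa_m$. That comparison needs a strength assumption, so the per-pair argument is safer.

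The main obstacle is justifying the model: that the follower's dynamics under Alg.~\ref{alg:alg} genuinely decouple into per-direction multiplicative growth driven by $\lambda_k(\vec r_t)$. This is where the timescale separation matters. With $\eta_\phi<\eta_\theta$, the leader $g_\phi$ is quasi-static, so the preference conditioning enters $f_\theta$'s gradient only through the scalarization weights. This lets the power-iteration result of \citep{zhang2026neural} be applied directly. The other step requiring care is the second-order control of the log-expansion, which needs $\eta_\theta\lambda$ small. This holds in the early-stage regime assumed in the proposition.
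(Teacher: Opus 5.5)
Your closing argument does prove the statement as literally quantified, with a pair-dependent $\vec\alpha$. It does so through a different and much weaker mechanism than the paper's.

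\textbf{Your route.} You win the \emph{deterministic race}. Tilting $\alpha_B\gg\alpha_A$ drives $\bar r_A\to 0$, so $\bar\lambda_m\to 0$ for $m\in\mathcal{X}_A\setminus\mathcal{X}_B$, while $\bar\lambda_k\to\lambda_{k,B}>0$. This also holds in the paper's exact model, $\rho_{k,m}(T)=(\bar\lambda_k/\bar\lambda_m)^{2T}\exp(-2T[\phi_k-\phi_m])$. There $\phi_m\ge 0$ and $\phi_k\le\log\bigl(\max(\lambda_{k,A},\lambda_{k,B})/\min(\lambda_{k,A},\lambda_{k,B})\bigr)$ stays bounded, so $\rho_{k,m}\to\infty$. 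The argument is also insensitive to how $\rho$ is defined.

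\textbf{The paper's route.} The paper keeps the mean preference balanced, using $\mathrm{Dir}(\alpha,\alpha)$, and exploits the Jensen gap $\phi_k=\log\bar\lambda_k-\mathbb{E}[\log\lambda_k^{(t)}]$. For a task-specific feature this gap equals $\psi(2\alpha)-\psi(\alpha)-\log 2$, which diverges as $\alpha\to 0$. A shared feature's gap stays bounded, and is zero if the feature is exactly shared.

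\textbf{What each buys.} The paper's route gives the real content of the proposition. The shared feature wins with $\bar\lambda_k$ and $\bar\lambda_m$ held fixed, even when $\bar\lambda_m>\bar\lambda_k$ (Remark~\ref{rem:domination}). A single symmetric $\alpha$ also treats $A$-specific and $B$-specific features alike. Your tilt needs opposite choices for those two cases. It amounts to training on objective $B$ alone, i.e.\ sampling preferences from one end of the front, which is not an implicit regularization effect.

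\textbf{Problems in the variance-penalty paragraph.} Your final argument does not use this paragraph, but it is wrong as written.
\begin{itemize}
\item In your $(1+\eta_\theta\lambda)$ model the penalty is $O(\eta_\theta^2)$ against an $O(\eta_\theta)$ mean gap. In the regime $\eta_\theta\lambda\ll 1$, where your expansion is valid, it cannot overturn any fixed gap in $\bar\lambda$, so it does not reproduce Theorem~\ref{thm:variance_penalty}.
\item The paper's penalty is $O(1)$ because weight decay $1/\eta$ cancels the identity, making the per-step multiplier $\eta\lambda_k^{(t)}$ itself. This pure multiplicative structure, together with the divergence of the Jensen gap as $\alpha\to 0$, is the idea you are missing.
\item The second-order term should be $\tfrac{\eta_\theta^2}{2}\mathbb{E}[\lambda^2]=\tfrac{\eta_\theta^2}{2}(\bar\lambda^2+\mathrm{Var}[\lambda])$, not $\tfrac{\eta_\theta^2}{2}\mathrm{Var}[\lambda]$.
\item The definition of $\rho$ is not a free choice. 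With the arithmetic-mean ratio $\mathbb{E}[\prod_t(\cdot)^2]$, variance \emph{increases} energy, since $\mathbb{E}[X^2]=(\mathbb{E}X)^2+\mathrm{Var}\,X$ for each factor, so the effect flips sign. The paper uses the geometric mean $\exp(\mathbb{E}[\log\mathcal{E}_k(T)])$ for this reason.
\item The weights $\omega_i$ double-count the preference under linear scalarization, where $\partial s/\partial\obj_i=r_i$, and they depend on $\vec r$ under Chebyshev scalarization. The paper simply uses $r_A\Sigma_A+r_B\Sigma_B$.
\end{itemize}
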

Thus, there exists a Dirichlet parameter $\vec{\alpha}$, such that the network preferentially learns shared features over objective-specific ones, even when the latter has a larger average importance as shown in Remark~\ref{rem:domination}.
The full proof, assumptions, and generalization to $K$ objectives can be found in App.~\ref{app:regulariaztion}.

\textbf{Computational time and memory complexity. }
Note that \ourmethod\ shares the same asymptotic complexity as COSMOS~\cite{ruchte2021scalable}, with $\mathcal{O}(e \cdot m \cdot p)$ time and $\mathcal{O}(e \cdot p)$ memory, where $e$ denotes the number of epochs and $p$ the number of network parameters. This efficiency arises because both methods perform a single forward and backward pass over $m$ objectives per gradient step. Importantly, the proposed split architecture introduces no additional overhead: the timescale separation approximation reduces the nested second-order derivatives of exact bilevel optimization to standard gradient descent, thereby achieving Pareto stationarity at no extra computational cost.
\vspace{-4pt}

\section{Experiments}
\label{sec:exp}

\subsection{Strongly conflicting setting: Fairness-Performance trade-off}
Fairness in machine learning is inherently a multi-objective problem, as metrics such as Difference in Equality of Opportunity (DEO) directly conflict with predictive performance~\cite{padh2021addressing, wei2022fairness}. Existing approaches typically treat fairness as a constraint or penalty, reducing the problem to a single trade-off and limiting the information available to decision makers~\cite{liu2022accuracy}. This is particularly problematic in high-stakes domains such as criminal justice, credit scoring, and healthcare~\cite{pmlr-v119-martinez20a, wei2022fairness}. 
We leverage this use case in the following experiments to empirically validate that \ourmethod~recovers diverse and high-quality Pareto frontiers in strongly conflicting settings.

\textbf{Datasets. }
We evaluate on Adult~\cite{adult_dataset}, COMPAS~\cite{angwin2016compas}, and Credit~\cite{hofmann1994german}, three widely used tabular fairness benchmarks with \texttt{'sex'} as the sensitive attribute, and on the image dataset FairFace~\cite{karkkainenfairface}, predicting \texttt{'gender'} under sensitive attribute \texttt{'race'}. Training objectives are binary cross-entropy (BCE) as a proxy for accuracy and a differentiable hyperbolic tangent relaxation of DEO~\cite{padh2021addressing} as a fairness surrogate; at test time we report misclassification error and DEO.

\textbf{Baselines and metrics. }
We compare \ourmethod~against COSMOS~\cite{ruchte2021scalable}, evaluating both on 25 uniformly distributed test rays. We additionally compare against the post-processing baseline of~\citet{hardt2016equality}, which operates on an unconstrained classifier (using the same neural network architecture) by sweeping a shared target TPR across per-group ROC convex hulls to derive group-specific decision thresholds, tracing out the Pareto front. Further details on hyperparameters, architectures, and training are provided in Appendix~\ref{app:exp}. 
We report the relative hypervolume~\cite{zitzler2007hypervolume}, defined as the achieved hypervolume divided by the maximum possible hypervolume between the utopia point (ideal, generally unattainable) and nadir point (worst per-objective value among observed solutions) as shown in Appendix~\ref{app:hv}; alongside the number of non-dominated solutions per method, where dominated solutions are removed in post-processing. 

\textbf{Results. }
Table~\ref{tab:fairness_app} summarizes the results on both training and test objectives. \ourmethod-LS and \ourmethod-CS consistently match or outperform COSMOS across datasets in terms of both hypervolume and the number of Pareto optimal solutions, providing strong empirical evidence that \ourmethod\ recovers diverse and high-quality Pareto frontiers in strongly conflicting settings. This advantage is particularly pronounced in Figure~\ref{fig:fairness}, where on FairFace, COSMOS collapses to a near-degenerate front, while both MOSEL variants maintain high hypervolume and a substantially larger set of Pareto optimal solutions.
The post-hoc fairness baseline~\cite{hardt2016equality} remains competitive in hypervolume in some cases (e.g., Default and FairFace test error), but generally yields fewer non-dominated solutions. Moreover, on datasets such as Adult, its solutions are concentrated in a narrow region of low unfairness but high error, limiting its practical utility. %
\begin{table}[h]
    \centering
    \caption{\textbf{Fairness-Performance Trade-offs.} Relative hypervolume and number of Pareto optimal solutions on the fairness benchmarks, computed on training loss and test error. Mean~$\pm$~std over five runs; best entries per column shaded.}
    \label{tab:fairness_app}
    \resizebox{0.9\textwidth}{!}{%
    \begin{tabular}{c  c  cc  cc }
    \toprule
       Dataset & Method & HV Loss $\uparrow$ & HV Error $\uparrow$ & \# PO sol. (Loss)  $\uparrow$ & \# PO sol. (Error)  $\uparrow$ \\
\midrule
\multirow{4}{*}{Adult} & MOSEL-LS & \cellcolor{gray!30}0.939 $\pm$ 0.007 & \cellcolor{gray!30}0.849 $\pm$ 0.012 & 22.8 $\pm$ 1.2 & \cellcolor{gray!30}21.2 $\pm$ 1.5 \\
 & MOSEL-CS & 0.819 $\pm$ 0.017 & 0.776 $\pm$ 0.015 & \cellcolor{gray!30}23.8 $\pm$ 0.7 & \cellcolor{gray!30}22.0 $\pm$ 1.4 \\
 & COSMOS & 0.926 $\pm$ 0.014 & 0.830 $\pm$ 0.010 & 20.2 $\pm$ 0.7 & 17.4 $\pm$ 1.9 \\
 & Classifier & 0.906 $\pm$ 0.025 & 0.618 $\pm$ 0.016 & 8.0 $\pm$ 0.6 & 6.2 $\pm$ 0.7 \\
\midrule
\multirow{4}{*}{COMPAS} & MOSEL-LS & 0.920 $\pm$ 0.019 & 0.568 $\pm$ 0.008 & \cellcolor{gray!30}15.6 $\pm$ 1.4 & \cellcolor{gray!30}14.6 $\pm$ 1.4 \\
 & MOSEL-CS & \cellcolor{gray!30}0.930 $\pm$ 0.007 & 0.560 $\pm$ 0.005 & \cellcolor{gray!30}15.2 $\pm$ 1.6 & \cellcolor{gray!30}14.2 $\pm$ 1.6 \\
 & COSMOS & \cellcolor{gray!30}0.928 $\pm$ 0.002 & 0.567 $\pm$ 0.019 & \cellcolor{gray!30}14.6 $\pm$ 1.0 & \cellcolor{gray!30}13.6 $\pm$ 1.0 \\
 & Classifier & 0.881 $\pm$ 0.026 & \cellcolor{gray!30}0.645 $\pm$ 0.037 & 9.2 $\pm$ 1.6 & 6.2 $\pm$ 1.5 \\
\midrule
\multirow{4}{*}{Default} & MOSEL-LS & \cellcolor{gray!30}0.951 $\pm$ 0.011 & \cellcolor{gray!30}0.922 $\pm$ 0.014 & \cellcolor{gray!30}17.0 $\pm$ 1.7 & \cellcolor{gray!30}15.4 $\pm$ 0.8 \\
 & MOSEL-CS & 0.902 $\pm$ 0.029 & 0.897 $\pm$ 0.024 & 13.6 $\pm$ 2.1 & 10.8 $\pm$ 2.3 \\
 & COSMOS & 0.938 $\pm$ 0.009 & 0.905 $\pm$ 0.016 & 14.6 $\pm$ 3.9 & 11.8 $\pm$ 2.5 \\
 & Classifier & \cellcolor{gray!30}0.941 $\pm$ 0.023 & \cellcolor{gray!30}0.925 $\pm$ 0.016 & 4.8 $\pm$ 1.3 & 4.0 $\pm$ 1.1 \\
\midrule
\multirow{4}{*}{FairFace} & MOSEL-LS & 0.987 $\pm$ 0.013 & \cellcolor{gray!30}0.701 $\pm$ 0.047 & 13.6 $\pm$ 1.0 & \cellcolor{gray!30}10.6 $\pm$ 2.3 \\
 & MOSEL-CS & \cellcolor{gray!30}0.998 $\pm$ 0.003 & \cellcolor{gray!30}0.734 $\pm$ 0.042 & 13.2 $\pm$ 2.3 & \cellcolor{gray!30}11.4 $\pm$ 2.4 \\
 & COSMOS & 0.193 $\pm$ 0.038 & 0.176 $\pm$ 0.037 & 3.2 $\pm$ 1.0 & 2.0 $\pm$ 0.9 \\
 & Classifier & 0.808 $\pm$ 0.009 & \cellcolor{gray!30}0.708 $\pm$ 0.030 & \cellcolor{gray!30}16.4 $\pm$ 1.4 & \cellcolor{gray!30}12.2 $\pm$ 1.6 \\
\bottomrule
     \end{tabular}}
\end{table}

\begin{figure}[ht]
    \centering
    \includegraphics[width=\linewidth]{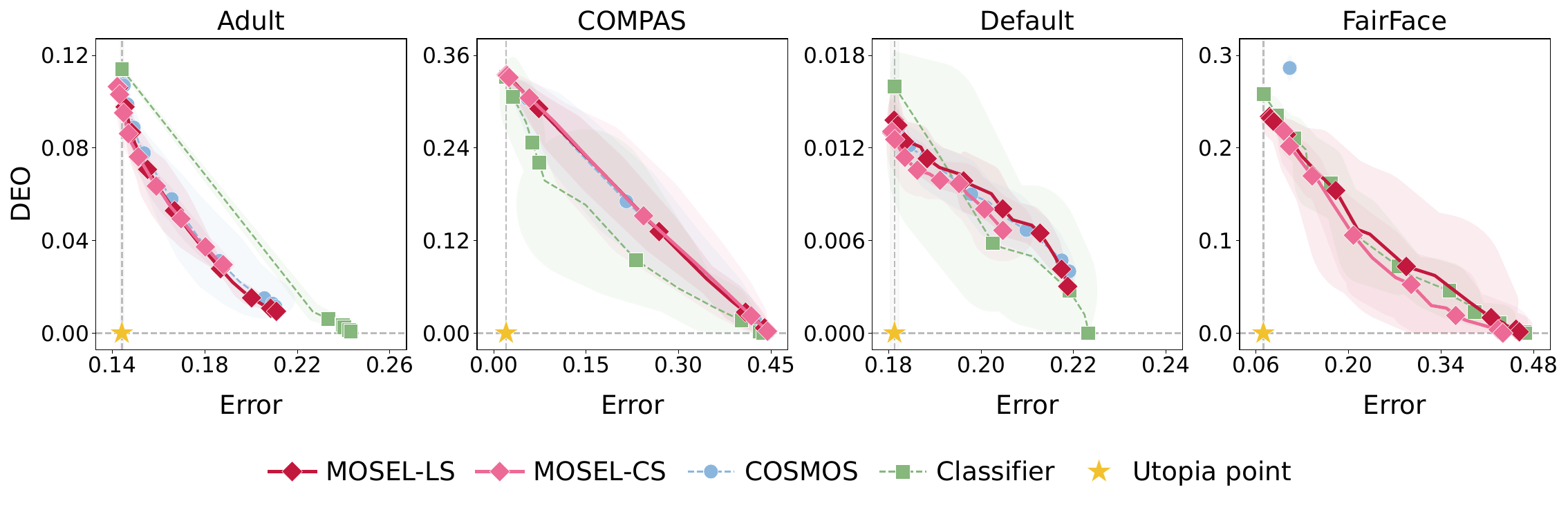}
    \caption{\textbf{Fairness-Performance Trade-offs.} Test-time Pareto fronts, plotting classification error against fairness violation. The grey dotted lines indicate zero unfairness and the best accuracy achievable by an unconstrained classifier; their intersection defines the utopia point—an ideal, yet unattainable solution toward which all methods aim to converge. }%
    \label{fig:fairness} \vspace{-10pt}
\end{figure}

\subsection{Weakly conflicting setting: Multi-Task Learning}

In multi-task learning, joint training can improve task performance via positive transfer, but task-specific losses may also induce conflicting gradients, degrading performance~\cite{sener2018multi}. However, such conflicts are not guaranteed as they depend on diverse factors like the data, training dynamics, etc., placing MTL in a weakly conflicting regime. As a result,   methods designed to resolve gradient conflicts—such as GradNorm~\cite{chen2018gradnorm}, FAMO~\cite{liu2023famo}, NashMTL~\cite{navon2022multi}, and PCGrad~\cite{yu2020gradient} —can even harm performance~\cite{liu2021conflict, shi2023recon}. \\
We use the MTL setting to demonstrate that \ourmethod\ adapts to both regimes without prior knowledge of the conflict structure. First, in a simple two-task setting, we show that \ourmethod\ converges to solutions closer to the utopia point by effectively leveraging positive transfer, outperforming both standard single-objective training and specialized multi-task learning methods without additional computational cost.
In a second experiment, we demonstrate that \ourmethod\ efficiently enables positive transfer in settings with many tasks (objectives). %

\begin{figure}[t]
    \centering
    \includegraphics[width=\linewidth]{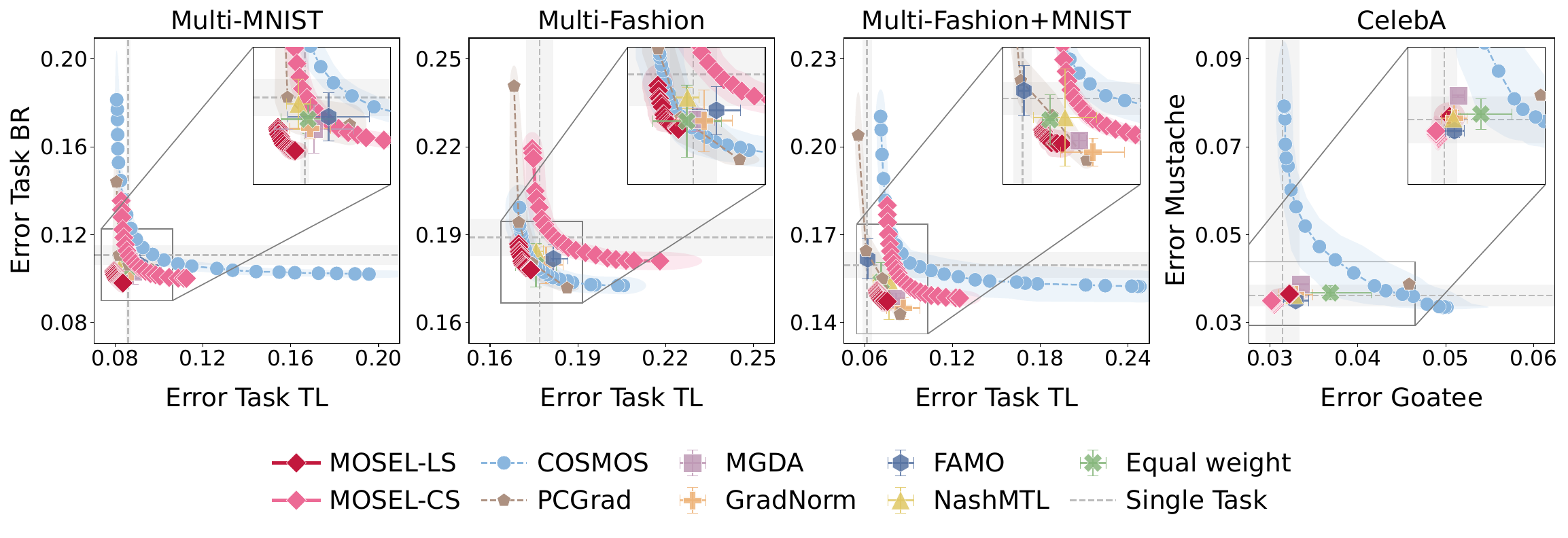}
    \caption{\textbf{Two-task setting.} Test-time Pareto fronts showing the trade-offs between task-specific errors. Grey dotted lines indicate the best achievable single-task performance on each task; their intersection marks the single-task baseline. In this weakly conflicting setting, positive transfer allows methods to surpass this baseline, with \ourmethod~achieving the largest improvements.}
    \label{fig:mtl}\vspace{-5pt}
\end{figure}

\begin{figure}
    \centering
    \includegraphics[width=\linewidth]{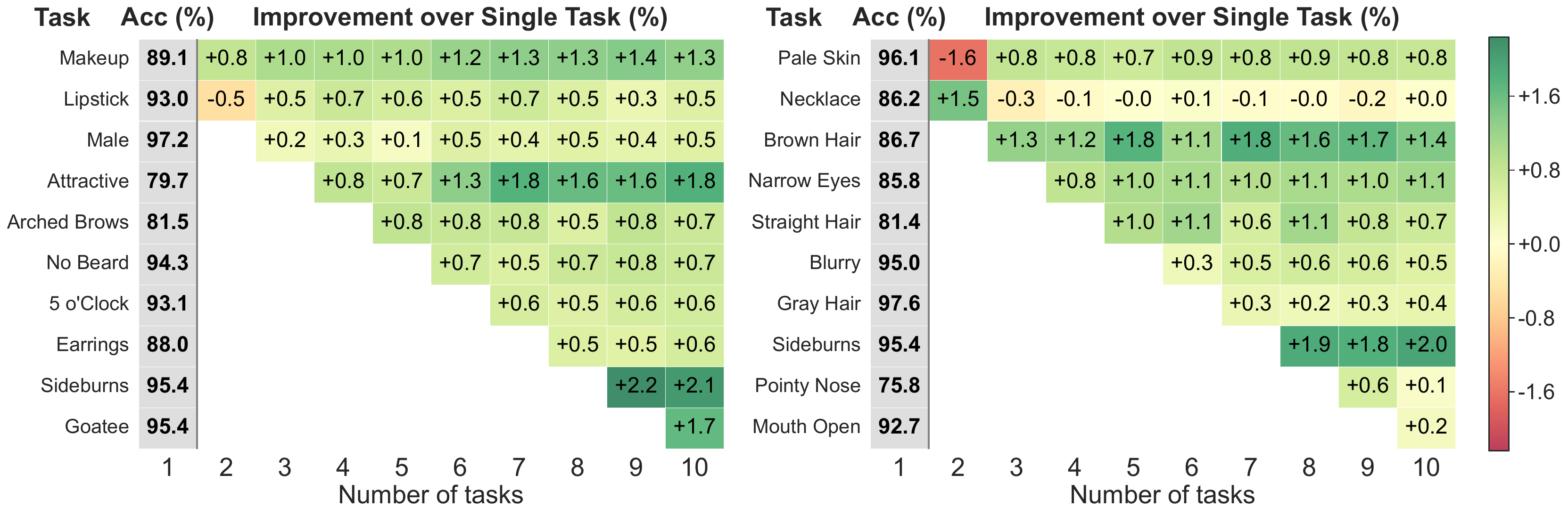}
    \caption{\textbf{Scalability to number of objectives on CelebA}. Per-task accuracy improvement over the single-task baseline (gray column) for the most correlated (left) and least correlated (right) task sets, as the number of jointly optimized tasks increases. Results are averaged over 5 seeds and evaluated on the Pareto stationary solution closest to the zero vector.} 
    \label{fig:celeba} \vspace{-15pt}
\end{figure}

\textbf{Datasets. }
We evaluate on three synthetic and one real-world benchmark. Multi-MNIST~\cite{sabour2017dynamic}, Multi-Fashion and Multi-Fashion+MNIST~\cite{lin2019pareto} are synthetic multi-task benchmarks that are composed of two digits or clothing items based on the MNIST~\cite{lecun1999object} and Fashion-MNIST~\cite{xiao2017fashion} dataset,where the goal is to simultaneously classify both objects (top-left and bottom-right), using per-task cross-entropy as objectives.  As real-world benchmark we consider the CelebA dataset~\cite{liu2015faceattributes}, containing over 200,000 images of celebrities, annotated with 40 binary attributes. %

\textbf{Baselines and metrics. }
We compare against COSMOS~\cite{ruchte2021scalable}, MGDA~\cite{sener2018multi}, PCGrad~\cite{yu2020gradient}, GradNorm~\cite{chen2018gradnorm}, FAMO~\cite{liu2023famo}, NashMTL~\cite{navon2022multi}, equal weighting and single task performances. While standard PCGrad uses equal weighting, we weigh the objectives based on these preferences to obtain diverse solutions, retraining and evaluating on 5 preferences. As other baselines do not support preference conditioning, we report the single obtained solution. Details on hyperparameters, architectures, and training configurations for all experiments are provided in Appendix~\ref{app:exp}. We train all methods using  (binary) cross-entropy; and report misclassification error at test time.

\textbf{Results on two-task setting. } As shown in Figure~\ref{fig:mtl}, \ourmethod~recovers solutions closest to the utopia point on all synthetic benchmarks, consistently outperforming COSMOS, most of the MTL methods, and the equal weight baseline. Notably, even in these weakly conflicting settings, where gradient-based MTL methods are specifically designed to help, 
\ourmethod~matches or surpasses their performance, demonstrating its benefits regardless of the degree of gradient conflict. 
\begin{wraptable}[10]{l}{0.5\linewidth}
\vspace{-7pt}
    \label{tab:comp_celeba}
    \resizebox{\linewidth}{!}{%
    \begin{tabular}{ c  c  c  c }     
    \toprule
     Method & $\Delta$ Param. & Time ($\times$) & CPU ($\times$)  \\
     \midrule
    Equal weight & 17,495,136 & $248.4 \pm 45.7$\,min & $11.15 \pm 0.25$\,MB \\
     \cmidrule(lr){1-4}
        MOSEL-LS    & +672   & $\times1.0 \pm 0.1$ & $\times0.97 \pm 0.02$ \\
        MOSEL-CS    & +672   & $\times1.0 \pm 0.2$ & $\times0.98 \pm 0.02$ \\
        COSMOS      & +1,072 & $\times1.1 \pm 0.1$ & $\times0.99 \pm 0.02$ \\
        PCGrad      & +0     & $\times9.8 \pm 2.2$ & $\times1.06 \pm 0.01$ \\
        MGDA        & +0     & $\times3.0 \pm 0.5$ & $\times0.99 \pm 0.03$ \\
        GradNorm    & +0     & $\times3.0 \pm 0.0$ & $\times0.95 \pm 0.02$ \\
        FAMO        & +0     & $\times1.2 \pm 0.1$ & $\times0.96 \pm 0.03$ \\
        NashMTL     & +0     & $\times2.4 \pm 0.2$ & $\times1.28 \pm 0.01$ \\
        Single Task & +0     & $\times2.0 \pm 0.3$ & $\times1.01 \pm 0.03$\\
    \bottomrule
    \end{tabular}}
\end{wraptable} 
Importantly, beyond solution quality, \ourmethod\ also excels in efficiency. The table on the left shows the computational overhead on CelebA relative to equal weight training (top); \# Param.\ are absolute counts; time and CPU are multiplicative factors; and,  means and standard deviations are computed over five runs. 
As reported, \ourmethod\ introduces negligible computational overhead, adding only 672 parameters and matching the training time of equal weight training, while PCGrad incurs a nearly $10\times$ slowdown due to per-preference retraining, and single-task training scales linearly with the number of tasks. Together, these results confirm that \ourmethod~recovers diverse and high-quality Pareto fronts at the cost of standard single-objective training.

 \textbf{Results on scalability to number of objectives. } 
To further assess scalability, we conduct an ablation on CelebA comparing correlated and uncorrelated task sets as the number of jointly optimized objectives grows (Figure~\ref{fig:celeba}), with tasks added incrementally and performance measured relative to single-task baselines. Task correlation is measured by the co-occurrence frequency of attributes in images. For \emph{correlated tasks} (left panel), shared representations enable strong positive transfer, with accuracy improvements over the single-task baseline growing consistently as more objectives are added. For \emph{uncorrelated tasks} (right panel), positive transfer is weaker, yet \ourmethod\ still outperforms single-task training on the majority of objectives. %
The occasional degradations can be attributed to two factors: tasks with high single-task accuracy have little room for improvement, and by choosing the Pareto optimal solution closest to the zero vector for evaluation, high-loss tasks dominate the distance computation, while low-loss tasks (high single-task accuracy) contribute minimally.
Nonetheless, the overall performance gains come at no additional cost: computational overhead remains negligible as the number of objectives grows, Figure~\ref{fig:runtime} shows that runtime stays approximately constant, with only minor fluctuations; while memory increases only marginally, highlighting the scalability of \ourmethod.  %

\section{Conclusion} \label{sec:conclusion}
We propose an efficient \emph{a posteriori} multi-objective framework for deep neural networks that achieves Pareto stationary solutions at the cost of standard single-objective training via a bilevel Stackelberg formulation enabling single-pass updates. 
Empirically, it recovers diverse Pareto frontiers in strongly conflicting settings and converges closer to the utopia point in weakly conflicting regimes such as multi-task learning, demonstrating that efficient multi-objective learning scales and improves performance by learning more diverse representations to capture all objectives.

\textbf{Limitations and future work. }
Despite its advantages, our approach has several limitations that open avenues for future research. 
First, our results suggest that efficient multi-objective learning can yield richer, more robust representations, especially in weakly conflicting regimes, motivating further study on %
transfer~\cite{pan2009survey} and alignment~\cite{he2025pareto}.
Moreover, our fairness–performance experiments show that our method achieves better trade-offs while providing diverse solutions for informed deployment; extending this to other high-stakes domains such as clinical risk prediction~\cite{martinez2019fairness}, recidivism assessment~\cite{liu2022accuracy} or credit scoring~\cite{kozodoi2022fairness} is an important direction for future work. %
Second, the method relies on a modular decomposition of neural networks, which may not be straightforward to design or optimal for all architectures; exploring adaptive or learned decompositions is therefore another important avenue.
Third, while the computational overhead of \ourmethod\ for recovering a Pareto stationary front is minimal, deploying the model requires selecting a preference vector—and thus a specific trade-off. This decision becomes increasingly challenging as the number of objectives grows and remains an open problem~\cite{javaloy2025copa}; addressing it via human-in-the-loop methods for multi-objective evaluation and selection is a key direction for future work. %

Overall, our results highlight the promise of our approach and, together with these challenges, motivate a broader research agenda at the intersection of scalable multi-objective learning, adaptive model design, and decision-aware deployment of deep neural models.

\begin{ack} %
We would like to thank Nektarios Kalampalikis for the insightful discussions and continuous support throughout this project.
This work has been supported by the project “Society-Aware Machine Learning: The paradigm shift demanded by society to trust machine learning,” funded by the European Union and led by IV (ERC-2021-STG, SAML, 101040177). Views and opinions expressed are, however, those of the author(s) only and do not necessarily reflect those of the aforementioned funding agencies. Neither of the aforementioned parties can be held responsible for them.
\end{ack}

\newpage
\bibliographystyle{plainnat}
\bibliography{references}

\newpage
\appendix

\section{Proofs}
\label{app:proofs}
\subsection{Non-stationarity of COSMOS solutions}
\begin{theorem}
Let $G$ be the objective function of the COSMOS algorithm, defined as:
\begin{equation}
    G = \vec{r}^T \obj - \lambda \frac{\vec{r}^T \obj}{\|\vec{r}\| \|\obj\|}
\end{equation}
where $\obj \in \mathbb{R}^m_{\geq 0}$ is the loss vector, $\vec{r} \in \mathbb{R}^m_{\geq 0}$ is the target preference vector on the probability simplex, i.e., $\sum r_i = 1$, and $\lambda > 0$ is a scalar hyperparameter. Assume that at any stationary point, the objective gradients exhibit non-degenerate conflict such that $\text{rank}(J_\obj) = m - 1$. As $r_i \to 1$, if the hyperparameter violates the bound $\lambda \leq \frac{\|\obj\|^2}{\|\obj\| - \obj_i}$, %
the stationary points of $G$ are strictly not Pareto stationary.
\end{theorem}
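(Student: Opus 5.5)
The approach is to compute $\nabla_{\vec\theta} G$ by the chain rule as $\mat{J}_\obj^T \vec{w}$, where $\vec{w} = \partial G/\partial \obj$. We then show that, in the stated regime, $\vec{w}$ cannot be proportional to a vector in the simplex. Since $\text{rank}(\mat{J}_\obj)=m-1$, the kernel of $\mat{J}_\obj^T$ is one-dimensional. So $\vec\theta$ is Pareto stationary iff that kernel is spanned by a nonnegative vector $\vec{v}$, normalised so that $\sum v_i=1$. A stationary point of $G$ has $\vec{w}\in\ker \mat{J}_\obj^T$, so $\vec w = c\,\vec v$ for some scalar $c$. Hence it suffices to show two things: $\vec{w}\neq \vec 0$, and $\vec w$ has entries of strictly mixed sign. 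Then no nonnegative $\vec v$ can span the kernel, so Pareto stationarity fails.

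First step: differentiate $G$ with respect to $\obj$. With $\vec{r}^T\obj =: a$ and $\|\obj\| =: N$,
\begin{equation*}
w_j \;=\; r_j - \frac{\lambda}{\|\vec r\|}\left(\frac{r_j}{N} - \frac{a\,\obj_j}{N^3}\right).
\end{equation*}

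Second step: take the limit $r_i\to 1$, so that $r_j\to 0$ for $j\neq i$, $\|\vec r\|\to 1$ and $a\to \obj_i$. In the limit,
\begin{equation*}
w_j \to \frac{\lambda\,\obj_i\obj_j}{N^3}\ \ (j\neq i), \qquad w_i \to 1-\frac{\lambda}{N}+\frac{\lambda\obj_i^2}{N^3}.
\end{equation*}
The off-preference components $w_j$ are nonnegative, and strictly positive whenever $\obj_j>0$. The sign question therefore reduces to the sign of $w_i$. Writing $w_i = 1-\lambda (N^2-\obj_i^2)/N^3$, it is negative exactly when $\lambda > N^3/(N^2-\obj_i^2)$. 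So in that regime $\vec w$ has mixed signs and is nonzero, and the argument above shows that no convex combination of the gradients vanishes.

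The main obstacle is reconciling this threshold with the bound in the statement, $\|\obj\|^2/(\|\obj\|-\obj_i)$, and with its direction ("violates $\lambda\le\cdots$"). I would first factor $N^2-\obj_i^2=(N-\obj_i)(N+\obj_i)$, which gives the threshold $N^3/\big((N-\obj_i)(N+\obj_i)\big)$. I would then check whether the paper's normalisation differs, for instance whether $G$ is implicitly scaled or $\|\vec r\|$ is kept rather than sent to $1$, and whether the intended bound is a sufficient condition that implies mine. Indeed $N/(N+\obj_i)\le 1$, so $\lambda>N^2/(N-\obj_i)$ already gives $\lambda>N^3/(N^2-\obj_i^2)$. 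The stated bound is thus a sufficient condition and the proof goes through.

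Finally, I would handle the degenerate cases separately:
\begin{enumerate}[label=(\roman*)]
\item Some $\obj_j=0$ makes $w_j=0$. Mixed sign then needs at least one $j\neq i$ with $\obj_j>0$, which holds whenever the losses are not all zero off coordinate $i$.
\item Continuity is needed to pass from the limit $r_i\to1$ to $r_i$ close to $1$: the strict inequalities persist in a neighbourhood.
\end{enumerate}
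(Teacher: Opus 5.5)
Your argument follows the paper's proof. Your $\vec{w}$ is exactly the paper's effective weight $\vec{r}'$. Both proofs use $\text{rank}(\mat{J}_\obj)=m-1$ to reduce Pareto stationarity to sign-definiteness of this vector, then show its $i$-th entry turns negative as $r_i\to 1$.

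The execution differs in one place. You take the limit of $w_i$ exactly, get the threshold $\|\obj\|^3/(\|\obj\|^2-\obj_i^2)$, and note that the stated bound implies it. The paper instead first relaxes $\vec{r}^\top\obj\le\|\vec{r}\|\|\obj\|$ by Cauchy--Schwarz and solves a quadratic in $\|\obj\|$. That relaxation is why the paper's threshold exceeds yours by the factor $(\|\obj\|+\obj_i)/\|\obj\|$, so your route is both shorter and sharper.

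There is one small hole. For $j\neq i$, the limits $w_j\to\lambda\obj_i\obj_j/\|\obj\|^3$ are strictly positive only if $\obj_i>0$ as well. If $\obj_i=0$ (where the hypothesis reads $\lambda>\|\obj\|$), the limit of $\vec{w}$ is $(1-\lambda/\|\obj\|)\vec{e}_i$, which is sign-definite. Limit plus continuity then gives no positive entry, so your mixed-sign conclusion is not established.

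The fix is the exact identity $\vec{w}^\top\obj=\vec{r}^\top\obj$. It holds for every $\vec{r}$ because the $\lambda$-terms cancel ($\vec{r}^\top\obj/\|\obj\|$ is $0$-homogeneous in $\obj$). The right-hand side is positive as soon as $r_j>0$ for some $j$ with $\obj_j>0$, e.g.\ for $\vec{r}$ in the relative interior of the simplex. The hypothesis forces $\obj_j>0$ for some $j\neq i$, since otherwise $\|\obj\|=\obj_i$ and the bound is undefined.

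So $\vec{w}$ always has a strictly positive entry. Together with $w_i<0$, this gives $\vec{w}\neq\vec{0}$ with mixed signs in every case, and it makes your degenerate case (i) unnecessary. This identity is what underlies the paper's remark that $\vec{r}'$ cannot be non-positive.
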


\begin{proof}
To evaluate the Pareto stationarity of $G$, we first find its gradient with respect to parameters $\theta$:
\begin{equation}
    \nabla_\theta G = J_\obj^T \vec{r} - \lambda J_\obj^T \left[ \frac{1}{\|\vec{r}\| \|\obj\|} \left( \vec{r} - \frac{\vec{r}^T \obj}{\|\obj\|^2} \obj \right) \right]
\end{equation}

We can rewrite this by factoring out the Jacobian transpose $J_\obj^T$ to isolate the effective weighting vector $\vec{r}'$:
\begin{equation}
    \nabla_\theta G = J_\obj^T \underbrace{\left[ \left( 1 - \frac{\lambda}{\|\vec{r}\| \|\obj\|} \right) \vec{r} + \left( \frac{\lambda (\vec{r}^T \obj)}{\|\vec{r}\| \|\obj\|^3} \right) \obj \right]}_{\vec{r}'}
\end{equation}

At a stationary point of the COSMOS algorithm, $\nabla_\theta G = 0$, which implies $J_\obj^T \vec{r}' = 0$. By our non-degenerate conflict assumption, the left null space of $J_\obj$ is 1-dimensional. Therefore, any valid Pareto preference vector $\vec{r}$ satisfying $J_\obj^T \vec{r} = 0$ must be a scalar multiple of $\vec{r}'$, i.e., $\vec{r} = c \cdot\vec{r}'$ for some $c \neq 0$. Consequently, for the point to be Pareto stationary, $\vec{r}'$ must be either strictly non-negative or strictly non-positive. Since $\vec{r}^T \obj > 0$ and $\obj \geq 0$, the vector $\vec{r}'$ cannot be strictly non-positive, and therefore Pareto stationarity strictly requires $\vec{r}' \geq 0$.

Enforcing $\vec{r}' \geq 0$ yields:
\begin{equation}
    \obj \geq - \left( \|\vec{r}\| \|\obj\| - \lambda \right)\left( \frac{ \|\obj\|^2}{\lambda (\vec{r}^T \obj)} \right) \vec{r}~.
\end{equation}

Rearranging the terms, we get:
\begin{equation}
    \obj \geq \left( \frac{\lambda - \|\vec{r}\| \|\obj\|}{\lambda} \right)\left( \frac{ \|\obj\|^2}{\vec{r}^T \obj} \right) \vec{r}~.
\end{equation}

By the Cauchy--Schwarz inequality, $\vec{r}^T \obj \leq \|\vec{r}\| \|\obj\|$. Because $\vec{r}^T \obj > 0$, we invert this to establish a strict lower bound for the fractional term:
\begin{equation}
    \frac{1}{\vec{r}^T \obj} \geq \frac{1}{\|\vec{r}\| \|\obj\|} \implies \frac{\|\obj\|^2}{\vec{r}^T \obj} \geq \frac{\|\obj\|}{\|\vec{r}\|}~.
\end{equation}

Substituting this lower bound back into our inequality, we establish the necessary condition for Pareto stationarity for any component $i$:
\begin{equation}
    \obj_i \geq \left( 1 - \frac{\|\vec{r}\| \|\obj\|}{\lambda} \right) \frac{\|\obj\|}{\|\vec{r}\|} r_i~.
\end{equation}

To simplify the analysis, let $\Delta_i = \|\obj\| - \obj_i \geq 0$ be the objective deficit. Substituting $\obj_i = \|\obj\| - \Delta_i$ yields:
\begin{equation}
    \|\obj\| - \Delta_i \geq \left( 1 - \frac{\|\vec{r}\| \|\obj\|}{\lambda} \right) \frac{\|\obj\|}{\|\vec{r}\|} r_i~.
\end{equation}

Expanding the right side and moving all terms to the left forms a quadratic inequality with respect to $\|\obj\|$:
\begin{equation}
    \frac{r_i}{\lambda} \|\obj\|^2 + \left( 1 - \frac{r_i}{\|\vec{r}\|} \right) \|\obj\| - \Delta_i \geq 0~.
\end{equation}

Solving this quadratic inequality for its non-negative root gives the exact necessary lower bound for the total loss norm $\|\obj\|$:
\begin{equation}
    \|\obj\| \geq \frac{\lambda}{2 r_i} \left[ -\left( 1 - \frac{r_i}{\|\vec{r}\|} \right) + \sqrt{ \left( 1 - \frac{r_i}{\|\vec{r}\|} \right)^2 + \frac{4 r_i \Delta_i}{\lambda} } \right]~.
\end{equation}

We now analyze the behavior of this necessary condition as the preference vector concentrates on objective $i$, meaning $r_i \to 1$. Because $\vec{r}$ is on the probability simplex ($\sum r_j = 1$ and $r_j \geq 0$), $r_i \to 1$ implies that all other components $r_j \to 0$, which consequently means $\|\vec{r}\| \to 1$. Applying this limit to the exact bound:
\begin{align}
    &\lim_{r_i \to 1} \frac{\lambda}{2 r_i} \left[ -\left( 1 - \frac{r_i}{\|\vec{r}\|} \right) + \sqrt{ \left( 1 - \frac{r_i}{\|\vec{r}\|} \right)^2 + \frac{4 r_i \Delta_i}{\lambda} } \right] \notag \\
    &\quad= \frac{\lambda}{2} \sqrt{\frac{4 \Delta_i}{\lambda}} \notag = \frac{\lambda}{2} \cdot \frac{2\sqrt{\Delta_i}}{\sqrt{\lambda}} \notag \\
    &\quad= \sqrt{\lambda \Delta_i}~.
\end{align}

Therefore, as $r_i \to 1$, the necessary condition for Pareto stationarity simplifies to:
\begin{equation}
    \|\obj\| \geq \sqrt{\lambda \Delta_i}~.
\end{equation}

Since both $\|\obj\|$ and $\sqrt{\lambda \Delta_i}$ are non-negative, we can square both sides and isolate the hyperparameter $\lambda$:
\begin{equation}
    \|\obj\|^2 \geq \lambda \Delta_i \implies \lambda \leq \frac{\|\obj\|^2}{\Delta_i}~.
\end{equation}

Substituting the objective deficit $\Delta_i = \|\obj\| - \obj_i$ back into the inequality:
\begin{equation}
    \lambda \leq \frac{\|\obj\|^2}{\|\obj\| - \obj_i}~.
\end{equation}

If this bound is violated, i.e., $\lambda > \frac{\|\obj\|^2}{\|\obj\| - \obj_i}$, no value of $\|\obj\|$ can maintain $\vec{r}' \geq 0$, forcing the effective weighting vector $\vec{r}'$ to contain negative components. Since our assumption guarantees that $\vec{r}'$ is the unique vector satisfying $J_\obj^T \vec{r}' = 0$, the presence of negative components implies that no non-negative preference vector $\vec{r} \geq 0$ can exist. Thus, when $r_i \to 1$ and $\lambda > \frac{\|\obj\|^2}{\|\obj\| - \obj_i}$, the stationary point is strictly not Pareto stationary. 
\end{proof}

\subsection{Stochastic weighting as implicit regularization}
\label{app:regulariaztion}
\citet{zhang2026neural} show that, under standard assumptions on the data and architecture, a two-layer neural network trained via layer-wise gradient descent on a Gaussian multi-index model $f(x) = g(Ux)$ learns the hidden subspace $U$ through a power method dynamic. Specifically, they consider
\begin{itemize}
    \item inputs $x \sim \mathcal{N}(0, I_d)$
    \item a two-layer network $h(x) = \sum_j a_j \sigma(w_j^\top x)$ with %
    a locally quadratic activation ($\sigma'(0) = 0$, $\sigma''(0) = 1$, bounded third derivative) %
    \item small isotropic initialization $w_j^{(0)} \sim \mathcal{N}(0, \sigma_w^2 I_d)$ with symmetric pairing of neurons such that the network output is zero at initialization and 
    \item fixed outer weights $a_j\in \{\pm 1\}$ during the first training stage.
\end{itemize}  
Near initialization the network output stays close to zero and $\sigma'(w_j^\top x)\approx\sigma''(0)w_j^\top x$, so the gradient w.r.t. $w_j$ is $a_j\sigma''(0)\hat{\Sigma_l}w_j$ where 
\begin{equation}
    \hat{\Sigma}_l = \frac{1}{n}\sum_{i=1}^n l'(0, y_i) x_i x_i^\top
\end{equation}
is the \emph{empirical discovery matrix}.  
Under these conditions, gradient descent with learning rate $\eta$ amd weight decay $1/\eta$ cancels the identity part of the update and yields a power iteration on $\hat{\Sigma}_l$ (using $\sigma''(0) = 1$):
\begin{equation}
    \label{eq:power-iteration}
    w_j^{(t+1)} \approx -\eta a_j\hat{\Sigma}_lw_j^{(t)} \quad  \text{so that} \quad w_j^{(T)} \approx (-\eta a_j)^T\hat{\Sigma}_l^T w_j^{(0)}.
\end{equation}
This implies that after $T$ steps, the expected fraction of a neuron's energy aligned with feature direction $u_k$ scales as $\lambda_k^{2T} / (\sum_i \lambda_i^{2T})$, where $\lambda_k$ is the corresponding eigenvalue of $\hat{\Sigma}_l$.
Features with larger eigenvalues exponentially dominate, and finite-sample noise is filtered out through the spectral gap.
We measure how much neuron $j$ has aligned with $u_k$ after $T$ steps by its \emph{feature energy} $\mathcal{E}_k (T) := \langle w_j^{(T)}, u_k \rangle^2$. Since $\mathcal{E}_k (T)$ is a product of $T$ random factors, we summarize it by its geometric mean $Signal_k(T) = \exp \big(\mathbb{E}[\log \mathcal{E}_k(T)]\big)$.

Leveraging the results from \citet{zhang2026neural}, we can show that in MTL stochastic task weights regularize against domination of one task, where the regularization strength is controlled by the variance of the Dirichlet distribution we sample from. \\

\begin{theorem}
\label{thm:variance_penalty}
Consider two tasks $A$ and $B$ together with the following assumptions: 
\begin{itemize}
    \item The discovery matrices $\Sigma_A, \Sigma_B$ share the same orthonormal eigenbasis $U$ with task-specific eigenvalues $\lambda_{k,A}$ and $\lambda_{k,B}$ for each feature direction $u_k$.
    \item At each gradient descent step $t$, the tasks are combined with stochastic weights $r_A^{(t)}, r_B^{(t)}$ satisfying $r_A^{(t)} + r_B^{(t)} = 1$, drawn i.i.d.\ from a distribution with expectations $\bar{r}_A, \bar{r}_B$ and shared variance $V_\alpha(r) := \text{Var}(r_A) = \text{Var}(r_B)$.
    \item Assuming $\lambda_{k,A}, \lambda_{k,B} \geq 0$ with $\bar{\lambda}_k>0$; let $\bar{\lambda}_k := \bar{r}_A \lambda_{k,A} + \bar{r}_B \lambda_{k,B}$ denote the expected eigenvalue and $\Delta_k := (\lambda_{k,A} - \lambda_{k,B})/\bar{\lambda}_k$ denote the relative task disagreement.
    \item Initialization $w_j^{(0)}$ is isotropic and independent of the task weights.
    \item The outer weights $a_j \in \{\pm1\}$ are shared by both tasks and fixed.
\end{itemize}
Then, under the power iteration regime of \citet{zhang2026neural}, the relative learning ratio between features $k$ and $m$ after $T$ steps satisfies 
\begin{equation}\label{eq:ratio-exact}
    \rho_{k,m}(T) = \left(\frac{\bar{\lambda}_k}{\bar{\lambda}_m}\right)^{2T}\exp\bigl(-2T\,[\phi_k - \phi_m]\bigr),
\end{equation}
with $\phi_k$ as defined in~\eqref{eq:stochastic-penalty}. If, in addition, $V_\alpha(r)\Delta_k^2 \ll 1$ (small fluctuations), then
\begin{align}\label{eq:rho}
    \begin{split}
         \rho_{k,m}(T)
        :=  \frac{Signal_k(T)}{Signal_m(T)} = \frac{\exp\big(\mathbb{E}\bigl[\log|\langle w_j^{(T)}, u_k\rangle|^2\bigr]\big)}{\exp\big(\mathbb{E}\bigl[\log|\langle w_j^{(T)}, u_m\rangle|^2\bigr]\big)} \\
        \approx \left(\frac{\bar{\lambda}_k}{\bar{\lambda}_m}\right)^{2T}
        \exp\Big({-T\cdot V_\alpha(r)\cdot \big[\Delta_k^2 - \Delta_m^2\big]}\Big).
    \end{split}
\end{align}
\end{theorem}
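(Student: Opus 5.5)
Under the shared-eigenbasis assumption, the plan is to diagonalize the stochastic power iteration of Eq.~\eqref{eq:power-iteration}. At step $t$ the discovery matrix is $\Sigma^{(t)} = r_A^{(t)}\Sigma_A + r_B^{(t)}\Sigma_B = U\,\mathrm{diag}\bigl(\lambda_k^{(t)}\bigr)U^\top$, where $\lambda_k^{(t)} := r_A^{(t)}\lambda_{k,A} + r_B^{(t)}\lambda_{k,B}$. All $\Sigma^{(t)}$ commute, so the product $\prod_t \Sigma^{(t)}$ is diagonal in $U$. This gives
\[
\langle w_j^{(T)}, u_k\rangle = (-\eta a_j)^T\Bigl(\prod_{t=1}^T \lambda_k^{(t)}\Bigr)\langle w_j^{(0)}, u_k\rangle .
\]
Squaring and taking logarithms, we get $\log\mathcal{E}_k(T) = 2T\log\eta + 2\sum_t \log\lambda_k^{(t)} + \log\langle w_j^{(0)},u_k\rangle^2$, since $a_j^2=1$. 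Here we use $\lambda_{k,\cdot}\ge 0$ and assume $\lambda_k^{(t)}>0$ almost surely (if some $\lambda_k^{(t)}$ can vanish, $\log$ is infinite and the statement should be read with $\phi_k=+\infty$).

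Next, take expectations over the i.i.d.\ weights and the independent isotropic initialization. By isotropy, $\mathbb{E}\log\langle w_j^{(0)},u_k\rangle^2$ does not depend on $k$, and the $\eta$ term is also common to all features, so both cancel in $\rho_{k,m}$. The i.i.d.\ assumption gives $\mathbb{E}\sum_t\log\lambda_k^{(t)} = T\,\mathbb{E}\log\lambda_k^{(1)}$. Writing $\lambda_k^{(1)} = \bar\lambda_k(1+\xi_k)$, where $\xi_k := (\lambda_k^{(1)}-\bar\lambda_k)/\bar\lambda_k$ has mean zero, we define
\[
\phi_k := -\mathbb{E}\bigl[\log(1+\xi_k)\bigr] \ge 0
\]
(nonnegative by Jensen); this is the quantity \eqref{eq:stochastic-penalty} must denote. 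Then $\log Signal_k(T) = \mathrm{const} + 2T\log\bar\lambda_k - 2T\phi_k$, and dividing the signals yields the exact identity \eqref{eq:ratio-exact}.

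For the small-fluctuation approximation \eqref{eq:rho}, compute $\xi_k$ explicitly using $r_B = 1-r_A$. Since $\lambda_k^{(1)} - \bar\lambda_k = (r_A-\bar r_A)(\lambda_{k,A}-\lambda_{k,B})$, we have $\xi_k = (r_A-\bar r_A)\Delta_k$, so $\mathrm{Var}(\xi_k) = V_\alpha(r)\Delta_k^2$. A second-order Taylor expansion $\log(1+\xi) = \xi - \xi^2/2 + O(\xi^3)$ then gives $\phi_k \approx \tfrac12 V_\alpha(r)\Delta_k^2$. Substituting into $\exp(-2T[\phi_k-\phi_m])$ yields $\exp\bigl(-T V_\alpha(r)[\Delta_k^2-\Delta_m^2]\bigr)$, which is \eqref{eq:rho}.

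The main obstacle is making this Taylor step rigorous. It needs control of the third-order remainder, for instance via bounded support of $r_A$ in $[0,1]$, which gives $|\xi_k|\le|\Delta_k|$. It also needs $\xi_k>-1$ uniformly, i.e.\ the positivity of $\lambda_k^{(t)}$ noted above. I would state the approximation to $O(\mathbb{E}|\xi_k|^3)$ under $V_\alpha(r)\Delta_k^2\ll 1$ rather than as an equality. A secondary issue is that the power iteration of \citet{zhang2026neural} is itself only approximate. I would therefore treat Eq.~\eqref{eq:power-iteration} as exact within its regime and claim the result only under that idealization.
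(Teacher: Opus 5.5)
Your proposal is correct and follows the paper's proof essentially step for step. You diagonalize the stochastic power iteration in the shared eigenbasis, take logs of the product, and use the i.i.d.\ weights and isotropic initialization to cancel the common terms. You then Taylor-expand to second order with $\xi_k = (r_A - \bar r_A)\Delta_k$, and your $\phi_k = -\mathbb{E}[\log(1+\xi_k)]$ is exactly the paper's $\log\bar\lambda_k - \mathbb{E}[\log\lambda_k^{(t)}]$. Your explicit caveat that $\lambda_k^{(t)}>0$ almost surely must be assumed is more careful than the paper. The paper infers positivity from the convex combination, which only gives $\ge 0$ unless, for example, the weights lie in $(0,1)$ almost surely as they do under a Dirichlet.
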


\begin{proof}
Under stochastic task weighting, the multi-task discovery matrix at step $t$ is 
\begin{equation}
    \Sigma_{\text{MT}}^{(t)} = r_A^{(t)}\Sigma_A + r_B^{(t)}\Sigma_B
\end{equation}
where $\Sigma_A, \Sigma_B$ are the empirical discovery matrices for task $A$ and $B$ respectively. 
Since both tasks share the eigenbasis $U$, the operator acts as a scalar on each eigendirection
\begin{equation}
    \lambda_k^{(t)} = r_A^{(t)}\lambda_{k,A} + r_B^{(t)}\lambda_{k,B}~.
\end{equation}
Writing the weight fluctuation as $\delta_t := r_A^{(t)} - \bar{r}_A$ (so that $r_B^{(t)} - \bar{r}_B = -\delta_t$), we obtain
\begin{equation}
    \lambda_k^{(t)} = \bar{\lambda}_k + \delta_t(\lambda_{k,A} - \lambda_{k,B})~.
\end{equation}
Note that $\lambda_k^{(t)}$ is a convex combination of $\lambda_{k,A},\lambda_{k,B} \geq 0$, so $\lambda_k^{(t)} > 0$ almost surely and its logarithm below is well defined.

Since $u_k$ is an eigenvector of both $\Sigma_A$ and $\Sigma_B$, it is an eigenvector of $\Sigma_{\text{MT}}^{(t)}$ for every realization of the task weights. Projecting the power iteration from Eq.~\eqref{eq:power-iteration}, using $\Sigma_{\text{MT}}^{(t)}$ in place of $\hat{\Sigma}_l$, onto $u_k$ gives the following recursion 
\begin{equation}
    \langle w_j^{(t)}, u_k \rangle = -\eta a_j \lambda_k^{(t)} \langle w_j^{(t-1)}, u_k \rangle
\end{equation}
for the $t$-th update.
Iterating from $t=1,\ldots,T$, the projection of the weight vector onto $u_k$ after $T$ steps factors as 
\begin{equation}
    \langle w_j^{(T)}, u_k \rangle = (-\eta a_j)^T \gamma_k \prod_{t=1}^T \lambda_k^{(t)},
\end{equation}
where $\gamma_k := \langle w_j^{(0)}, u_k \rangle$ is the initial projection. The feature energy aligned with feature $k$ is the square of this projection and since $|a_j|=1$:
\begin{equation}
    \mathcal{E}_k (T) = |\langle w_j^{(T)}, u_k \rangle|^2 = \eta^{2T} \gamma_k^2 \prod_{t=1}^T \bigl(\lambda_k^{(t)}\bigr)^2.
\end{equation}
Moving to log-space, we obtain:
\begin{equation}
    \label{eq:log-enegery}
    \log\mathcal{E}_k (T)  = 2T\log\eta + \log \gamma_k^2 + 2\sum_{t=1}^T \log \lambda_k^{(t)}.
\end{equation}
Taking expectations, $\mathbb{E}[\log \lambda_k^{(t)}]$ is the same for every $t$ since the task weights are identically distributed. We define the \emph{stochastic penalty} of feature $k$ as 
\begin{equation}
    \label{eq:stochastic-penalty}
    \phi_k := \log \bar{\lambda}_k - \mathbb{E}[\log \lambda_k^{(t)}] \geq 0
\end{equation}
which is non-negative by Jensen's inequality, with equality if and only if $\lambda_k^{(t)}$ is almost surely constant, e.g. for a shared feature with $\lambda_{k,A} = \lambda_{k,B}$.
Substituting $\mathbb{E}[\log \lambda_k^{(t)}]$ according to Eq.~\eqref{eq:stochastic-penalty} and exponentiating both sides gives the geometric mean signal for feature $k$:
\begin{equation}
\label{eq:signal}
    Signal_k(T) = \exp\big(\mathbb{E}[\log \mathcal{E}_k(T)]\big) = \exp\big(\mathbb{E}[\log\gamma_k^2]\big)\cdot (\eta\bar{\lambda}_k)^{2T} \cdot \exp\left(-2T\phi_k\right).
\end{equation}
Forming the ratio $\rho_{k,m}(T) := Signal_k(T)/Signal_m(T)$, the learning rate $\eta^{2T}$ cancels identically, and the initialization terms cancel since $\mathbb{E}[\log \gamma_k^2] = \mathbb{E}[\log\gamma_m^2]$. Combining the exponential terms via $e^A/e^B = e^{A-B}$ yields
\begin{equation}
    \label{eq:ratio-exact}
    \rho_{k,m}(T)= \left(\frac{\bar{\lambda}_k}{\bar{\lambda}_m}\right)^{2T}
        \exp\!\Big({-2T \big[\phi_k - \phi_m\big]}\Big).    
\end{equation}
Thus it remains to evaluate $\phi_k$ and $\phi_m$.
Substituting $\lambda_k^{(t)} = \bar{\lambda}_k + \delta_t(\lambda_{k,A} - \lambda_{k,B})$ and applying the second-order Taylor expansion 
\begin{equation}
    \log(\bar{\lambda}_k + x) \approx \log\bar{\lambda}_k + \frac{x}{\bar{\lambda}_k} - \frac{x^2}{2\bar{\lambda}_k^2}
\end{equation}
with $x = \delta_t(\lambda_{k,A} - \lambda_{k,B})$, each summand becomes:
\begin{equation}
    \log\lambda_k^{(t)} \approx \log\bar{\lambda}_k + \delta_t\,\frac{\lambda_{k,A} - \lambda_{k,B}}{\bar{\lambda}_k} - \frac{\delta_t^2}{2}\left(\frac{\lambda_{k,A} - \lambda_{k,B}}{\bar{\lambda}_k}\right)^2.
\end{equation}
Taking expectations term by term, the zero-order term contributes $\log\bar{\lambda}_k$; the first-order term vanishes since $\mathbb{E}[\delta_t] = 0$ ; the second-order term contributes $-\tfrac{1}{2}V_\alpha(r)\Delta_k^2$, using $\mathbb{E}[\delta_t^2] = V_\alpha(r)$ and the definition $\Delta_k := (\lambda_{k,A} - \lambda_{k,B})/\bar{\lambda}_k$. Thus
\begin{equation}
    \label{eq:phi-approx}
    \phi_k \approx \frac{1}{2} V_\alpha(r) \Delta_k^2 \quad \text{and} \quad \phi_m \approx \frac{1}{2} V_\alpha(r) \Delta_m^2
\end{equation}
with $V_\alpha(r) \Delta_k^2, V_\alpha(r) \Delta_m^2 \ll 1$ under the small fluctuation assumption.

Subsituting Eq.~\eqref{eq:phi-approx} in Eq.~\eqref{eq:ratio-exact} yields Eq.~\eqref{eq:rho}.
\end{proof}

\begin{remark}
Equation~\eqref{eq:rho} decomposes the competition between any two features into two multiplicative forces.
The first factor, $(\bar{\lambda}_k/\bar{\lambda}_m)^{2T}$, is a \emph{deterministic race}: it exponentially favors whichever feature has the larger average eigenvalue across tasks, and it is exactly the dynamics that would arise under fixed (non-stochastic) task weighting.
The second factor, $\exp\big(-T\cdot V_\alpha(r)[\Delta_k^2 - \Delta_m^2]\big)$, is a \emph{stochastic penalty}: it exponentially suppresses whichever feature has higher relative task disagreement $\Delta_k$.
This penalty arises because multiplicative noise always reduces geometric growth rates, the same mechanism by which SGD's minibatch noise implicitly regularizes in the single-task setting~\citep{smith2018sgd}. Crucially, shared features ($\Delta_k \approx 0$) are unaffected by the penalty, while task-specific features ($|\Delta_k| \gg 0$) are actively suppressed, preventing the dominant task from hijacking the network's representational capacity.
The sampling variance $V_\alpha(r)$ controls the strength of this regularization, playing a role analogous to the noise scale $\eta/B$ in SGD implicit regularization~\citep{smith2018batchsize}. \\
\end{remark}

\begin{remark}[Regularization overcoming task domination.]
\label{rem:domination}
Suppose feature $k$ has a larger average eigenvalue than feature $m$ ($\bar{\lambda}_k > \bar{\lambda}_m$), but feature $k$ is task-specific while feature $m$ is shared ($|\Delta_k| > |\Delta_m| \approx 0$).
The two factors in~\eqref{eq:rho} pull in opposite directions.
Feature $m$ wins the competition (i.e., $\rho_{k,m}(T) < 1$) when the stochastic penalty dominates.
Since both factors scale linearly in $T$ in the exponent, the step count cancels and the critical variance at which the transition occurs is:
\begin{equation}\label{eq:critical_variance}
    V_\alpha^\star(r) = \frac{2\log(\bar{\lambda}_k / \bar{\lambda}_m)}{\Delta_k^2 - \Delta_m^2}.
\end{equation}
For $V_\alpha(r) > V_\alpha^\star(r)$, the network aligns more strongly with the weaker but shared feature than with the dominant but task-specific one.
For $V_\alpha(r) < V_\alpha^\star(r)$, the deterministic advantage prevails and the task-specific feature dominates, leading to negative transfer.
When sampling task weights from a symmetric Dirichlet distribution $\mathrm{Dir}(\alpha,\alpha)$, $V_\alpha(r) = 1/(8\alpha + 4)$ for the two-task case, so lower concentration $\alpha$ increases the regularization strength.

The threshold~\eqref{eq:critical_variance} relies on the second-order approximation~\eqref{eq:rho} and therefore inherits the small-fluctuation assumption $V_\alpha(r)\Delta_k^2 \ll 1$ of Theorem~\ref{thm:variance_penalty}. Within this regime, $V_\alpha(r) > V_\alpha^\star(r)$ requires
\begin{equation}
    2\log(\bar{\lambda}_k/\bar{\lambda}_m) < V_\alpha(r)\,(\Delta_k^2 - \Delta_m^2) \ll 1,
\end{equation}
so~\eqref{eq:critical_variance} only describes reversals between features whose average eigenvalues are close, $\bar{\lambda}_k/\bar{\lambda}_m \approx 1$.

To overturn a large eigenvalue gap, the weights must fluctuate strongly, i.e.\ the variance must be large, which lies outside the validity of the Taylor expansion. In that case we use the exact identity~\eqref{eq:ratio-exact}, which requires no small-fluctuation assumption and according to which $\rho_{k,m}(T) < 1$ if and only if
\begin{equation}
    \log(\bar{\lambda}_k/\bar{\lambda}_m) < \phi_k - \phi_m,
\end{equation}
again independently of $T$. For example, let feature $k$ be specific to task $A$ ($\lambda_{k,B} = 0$), let feature $m$ be shared ($\lambda_{m,A} = \lambda_{m,B}$) and let $(r_A^{(t)}, r_B^{(t)}) \sim \mathrm{Dir}(\alpha_A, \alpha_B)$ with $\alpha_0 := \alpha_A + \alpha_B$. Then $\lambda_k^{(t)} = r_A^{(t)}\lambda_{k,A}$ and $\lambda_m^{(t)} = \bar{\lambda}_m$, so that $\phi_m = 0$ and
\begin{equation}
    \phi_k = \log\bar{r}_A - \mathbb{E}\bigl[\log r_A^{(t)}\bigr].
\end{equation}
Here $\bar{r}_A = \alpha_A/\alpha_0$ and $\mathbb{E}[\log r_A^{(t)}] = \psi(\alpha_A) - \psi(\alpha_0)$, where $\psi$ is the digamma function, so the penalty has the closed form
\begin{equation}
    \phi_k = \log\frac{\alpha_A}{\alpha_0} - \psi(\alpha_A) + \psi(\alpha_0).
\end{equation}
The shared feature therefore wins if and only if $\bar{\lambda}_k/\bar{\lambda}_m < \exp(\phi_k)$. In the symmetric case $\mathrm{Dir}(\alpha,\alpha)$, the penalty reduces to $\phi_k = \psi(2\alpha) - \psi(\alpha) - \log 2$. It decreases monotonically in $\alpha$, diverges as $\alpha \to 0$, and approaches $1$ as $\alpha \to \infty$, where it agrees with~\eqref{eq:critical_variance}. If feature $k$ is only approximately task-specific or feature $m$ only approximately shared, these expressions hold approximately. Consequently, for any eigenvalue gap $\bar{\lambda}_k/\bar{\lambda}_m$, a sufficiently small concentration lets the shared feature win. However, strong regularization comes at a cost: as $\alpha \to 0$ the weights concentrate on the vertices of the simplex, training degenerates into alternating single-task updates, and every feature with $\phi_k > 0$ is slowed relative to fixed weighting. \\
\end{remark}

\newpage
\begin{remark}[No regularization effect.]
If both features exhibit similar levels of task disagreement ($\Delta_k \approx \Delta_m$), then $\Delta_k^2 - \Delta_m^2 \approx 0$ and the stochastic penalty cancels out of the ratio regardless of $V_\alpha(r)$.
The competition reduces to the pure deterministic race $\rho_{k,m}(T) \approx (\bar{\lambda}_k/\bar{\lambda}_m)^{2T}$, and the stronger feature dominates exponentially.
This is expected: the variance penalty targets \emph{differential} disagreement between features, not disagreement per se.
When both features are equally task-specific (or equally shared), the stochastic weighting provides no selective pressure. \\
\end{remark}

\begin{proposition}[Worst-case generalization to $J$ tasks]
Consider $J$ tasks that share the same orthonormal eigenbasis $U$ for their discovery matrices, with task-specific eigenvalues $\lambda_{k,1},\ldots,\lambda_{k,J} \geq 0$ for each feature direction $u_k$. At each gradient descent step $t$, the tasks are combined with stochastic weights $r_1^{(t)},\ldots,r_J^{(t)}$ satisfying $\sum_{i=1}^J r_i^{(t)} = 1$, drawn i.i.d.\ from a distribution with expectations $\bar{r}_i$ and shared variance $V_\alpha(r) := \text{Var}(r_1)$. Let $\bar{\lambda}_k := \sum_{i=1}^J r_i^{(t)}\lambda_{k,i}$ denote the expected eigenvalue and let 
\begin{equation*}
    \lambda^{max}_k := \max_i \lambda_{k,i} \quad\quad    \lambda^{min}_k := \min_i \lambda_{k,i} \quad\quad \Delta_k := \frac{\lambda^{max}_k -\lambda^{min}_k }{\bar{\lambda}_k}
\end{equation*}
denote the eigenvalue of most and least favored  task for feature $k$ and the worst-case relative disagreement between them. Assuming $\Delta_k,\Delta_m\ll1$, the relative learning ratio is bounded by 
\begin{equation}
    \label{eq:worst_case_bound}
    \left(\frac{\bar{\lambda}_k}{\bar{\lambda}_m}\right)^{2T} \exp\!\left({-\frac{T}{4} \Delta_k^2}\right) \lesssim \rho_{k,m}(T) \lesssim \left(\frac{\bar{\lambda}_k}{\bar{\lambda}_m}\right)^{2T} \exp\!\left({\frac{T}{4} \Delta_m^2}\right)
\end{equation}
where $\lesssim$ denotes inequality up to the second-order approximation.
\end{proposition}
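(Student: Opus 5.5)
The plan is to repeat the proof of Theorem~\ref{thm:variance_penalty} up to the exact identity \eqref{eq:ratio-exact}, and then bound each stochastic penalty $\phi_k$ separately instead of computing it. First I would check that the two-task argument carries over verbatim. Since all $J$ discovery matrices share the eigenbasis $U$, the multi-task matrix $\Sigma_{\text{MT}}^{(t)}=\sum_i r_i^{(t)}\Sigma_i$ acts on $u_k$ as the scalar $\lambda_k^{(t)}=\sum_i r_i^{(t)}\lambda_{k,i}$. The projected power iteration, the log-energy decomposition \eqref{eq:log-enegery} and the cancellation of $\eta$ and of the initialization then go through unchanged. This gives
\begin{equation*}
\rho_{k,m}(T)=\Bigl(\tfrac{\bar\lambda_k}{\bar\lambda_m}\Bigr)^{2T}\exp\bigl(-2T[\phi_k-\phi_m]\bigr),
\qquad \phi_k=\log\bar\lambda_k-\mathbb{E}[\log\lambda_k^{(t)}]\ge 0 .
\end{equation*}
Here $\bar\lambda_k=\sum_i\bar r_i\lambda_{k,i}$ must be read as the expectation; the statement writes $r_i^{(t)}$, which I take to be a typo.

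Next I would note that, since $\phi_k,\phi_m\ge 0$ by Jensen, it is enough to show $\phi_k\lesssim \Delta_k^2/8$ for every feature. Dropping $\phi_m\ge0$ then gives the lower bound $-2T(\phi_k-\phi_m)\ge -2T\phi_k\gtrsim -\tfrac{T}{4}\Delta_k^2$. Symmetrically, dropping $\phi_k\ge0$ gives the upper bound $-2T(\phi_k-\phi_m)\le 2T\phi_m\lesssim \tfrac{T}{4}\Delta_m^2$. This explains the asymmetric form of \eqref{eq:worst_case_bound}.

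To bound $\phi_k$, I would use the second-order expansion from the theorem's proof. With $x_t:=\lambda_k^{(t)}-\bar\lambda_k$ and $\mathbb{E}[x_t]=0$, it gives $\phi_k\approx \mathrm{Var}(\lambda_k^{(t)})/(2\bar\lambda_k^2)$, and the assumption $\Delta_k\ll1$ justifies the expansion because $|x_t|/\bar\lambda_k\le\Delta_k$. The key observation is that $\lambda_k^{(t)}$ is a convex combination of the $\lambda_{k,i}$. It therefore lies in $[\lambda_k^{\min},\lambda_k^{\max}]$ almost surely, whatever the joint law of the weights. Popoviciu's inequality for a variable supported on an interval then gives $\mathrm{Var}(\lambda_k^{(t)})\le(\lambda_k^{\max}-\lambda_k^{\min})^2/4$. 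Hence $\phi_k\lesssim \Delta_k^2/8$, and multiplying by $2T$ produces the factor $T/4$.

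The main obstacle is making the phrase ``up to second-order approximation'' honest. The bound should not depend on the shared-variance assumption, which Popoviciu lets us avoid entirely; it is the right worst-case tool where a $J$-dimensional covariance computation would not be. The third-order remainder of $\log(1+x/\bar\lambda_k)$ is $O(\Delta_k^3)$ and must be discarded consistently in both directions. I would state explicitly that the inequalities hold modulo $O(T\Delta^3)$ terms.
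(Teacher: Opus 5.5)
Your proposal is correct and follows essentially the same route as the paper. Both reuse the exact identity \eqref{eq:ratio-exact}, bound each $\phi_k$ below by $0$ via Jensen and above by $\Delta_k^2/8$ via the second-order expansion plus Popoviciu's inequality, and pair the lower bound on one penalty with the upper bound on the other; the only cosmetic difference is that the paper bounds each $Signal_k(T)$ first and then divides, whereas you work directly with $\phi_k-\phi_m$ in the exponent, and your reading of $\bar\lambda_k$ as $\sum_i\bar r_i\lambda_{k,i}$ is the one the paper's proof actually uses.
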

\begin{proof}
    Since $\lambda_k^{(t)}$ is a convex combination of $\{\lambda_{k,i}\}_i$ it takes values in $[\lambda^{min}_k, \lambda^{max}_k]$ almost surely and $\mathbb{E}[\lambda_k^{(t)}] = \bar{\lambda}_k$.  By Popoviciu's inequality,
    \begin{equation}
        \label{eq:var-upper-bound}
        \text{Var}\left(\lambda_k^{(t)}\right) \leq \frac{(\lambda^{max}_k -\lambda^{min}_k)^2}{4} = \frac{\Delta_k^2\bar{\lambda}_k^2}{4}~.
    \end{equation}
    The proof of Theorem~\ref{thm:variance_penalty} up to Eq.~\eqref{eq:ratio-exact} uses only the shared eigenbasis, positivity of $\lambda_k^{(t)}$ and identically distributed weights and thus applies to $J$ tasks in the same manner. Hence we can use Eq.~\eqref{eq:ratio-exact} and bound $\phi_k$ from both sides. 
    From below, $\phi_k \geq 0$ by Jensen's inequality. From above, we can repeat the Taylor expansion of Theorem~\ref{thm:variance_penalty} with $\lambda_k^{(t)}-\bar{\lambda}_k$ in place of $\delta_t(\lambda_{k,A}-\lambda_{k,B})$ gives 
    \begin{equation}
        \phi_k \approx \frac{1}{2\bar{\lambda}_k^2} \text{Var}\left(\lambda_k^{(t)}\right) \leq \frac{\Delta_k^2}{8}
    \end{equation}
    using Popoviciu's bound in~\eqref{eq:var-upper-bound}. Thus $0\leq\phi_k\lesssim \Delta_k^2/8$, plugging this to Eq.~\eqref{eq:signal} yields
    \begin{equation}
        \exp\big(\mathbb{E}[\log\gamma_k^2]\big)\cdot (\eta\bar{\lambda}_k)^{2T} \cdot \exp\left(-\frac{T}{4}\Delta_k^2\right) \lesssim Signal_k(T) \leq
        \exp\big(\mathbb{E}[\log\gamma_k^2]\big)\cdot (\eta\bar{\lambda}_k)^{2T} 
    \end{equation}
    The same bounds hold for feature $m$. To bound the ratio $\rho_{k,m}(T) = Signal_k(T)/Signal_m(T)$, we combine the upper bound on one signal with the lower bound on the other. The prefactors $\exp\big(\mathbb{E}[\log\gamma_k^2]\big)$ and $\exp\big(\mathbb{E}[\log\gamma_m^2]\big)$ cancel because $\mathbb{E}[\log\gamma_k^2] = \mathbb{E}[\log\gamma_m^2]$ by the isotropic initialization, and $\eta^{2T}$ cancels identically. For the upper bound, dividing the upper bound on $Signal_k(T)$ by the lower bound on $Signal_m(T)$ gives
    \begin{equation}
        \rho_{k,m}(T) \lesssim \frac{(\eta\bar{\lambda}_k)^{2T}}{(\eta\bar{\lambda}_m)^{2T}\exp\left(-\frac{T}{4}\Delta_m^2\right)}
        = \left(\frac{\bar{\lambda}_k}{\bar{\lambda}_m}\right)^{2T}\exp\left(\frac{T}{4}\Delta_m^2\right).
    \end{equation}
    For the lower bound, dividing the lower bound on $Signal_k(T)$ by the upper bound on $Signal_m(T)$ gives
    \begin{equation}
        \rho_{k,m}(T) \gtrsim \frac{(\eta\bar{\lambda}_k)^{2T}\exp\left(-\frac{T}{4}\Delta_k^2\right)}{(\eta\bar{\lambda}_m)^{2T}}
        = \left(\frac{\bar{\lambda}_k}{\bar{\lambda}_m}\right)^{2T}\exp\left(-\frac{T}{4}\Delta_k^2\right).
    \end{equation}
    Together, these two inequalities yield~\eqref{eq:worst_case_bound}. 
\end{proof}

\begin{remark}[Extension and Scope]
The theory presented in Theorem~\ref{thm:variance_penalty} extends to broader architectures and settings through a single key structural property: that early-stage gradient descent drives each neuron's parameters via a \emph{power iteration} on a label-weighted second-order operator, such that the projection onto any joint feature direction $u_k$ factors multiplicatively as $$\langle w_j^{(T)}, u_k\rangle = \eta^T \gamma_k \prod_{t=1}^T \lambda_k^{(t)}.$$ We discuss the conditions required by Theorem~\ref{thm:variance_penalty} and how they generalize below. 

\textbf{Fixed outer weights.} MOSEL trains the representation network $f_\theta$ and the preference alignment network $g_\phi$ with learning rates satisfying $\eta_\phi \ll \eta_\theta$, following the Stackelberg timescale separation of \citet{pmlr-v119-fiez20a}. Thus $g_\phi$ is approximately stationary over the timescale on which $\theta$ updates, corresponding to fixed outer weights during the feature learning phase while $f_\theta$ executes the power iteration. 

\textbf{Power iteration in representation space.} Under the layer-wise decoupling of \citet{dandi2026deep}, which holds for smooth activations with $\sigma''(0) \neq 0$ and vanishing initialization scaling, each layer $\ell$ independently performs a power iteration on a label-weighted moment operator $$\hat{C}^{(\ell)} = \tfrac{1}{n}\sum_{\mu=1}^n y_\mu z_{\ell-1}(x_\mu) z_{\ell-1}(x_\mu)^\top.$$
Linear scalarization under stochastic preference sampling $r^{(t)} \sim \text{Dir}(\alpha)$ produces at each layer and step an effective operator $\hat{C}^{(\ell,t)} = \sum_i r_i^{(t)} \hat{C}^{(\ell)}_i$, a stochastic convex combination of the per-objective operators. This layer-wise analysis extends naturally to convolutional and other structured architectures, where the spectral operators can be equivalently defined over the channel space. 

\textbf{Shared eigenbasis.} Requiring a common eigenbasis is equivalent to requiring that the per-task discovery matrices $\Sigma_A$ and $\Sigma_B$ commute. Shared inputs alone do not guarantee this. It holds when the tasks depend on a common set of orthonormal feature directions and each task's discovery matrix is diagonal in that set. This includes tasks whose relevant subspaces coincide, tasks whose subspaces are mutually orthogonal, and tasks that draw on different subsets of a shared feature dictionary. It generally fails when the task subspaces overlap at oblique angles. The assumption is therefore most natural in multi-task learning, where all tasks are defined on the same inputs and differ only in their targets. In fairness settings it is more restrictive and does not hold in general. We further require the tasks to agree in sign on each feature's eigenvalue. If they do not, the combined eigenvalue $\lambda_k^{(t)}$ can reach zero, and the corresponding direction is erased rather than merely slowed down.

\textbf{Loss function.} The power-iteration mechanism requires only that early training iterates some label-dependent operator; the loss function affects the dynamics solely through the form of this operator. Under general losses, the relevant spectral object generalizes to a transformed correlation operator $\hat{\Sigma}_g = \tfrac{1}{n}\sum_{i=1}^n g(y_i) x_i x_i^\top$, where $g(y)$ is determined by the loss gradient~\citep{dandi2026deep}. The core stochastic mitigation of task domination is therefore decoupled from the squared loss. While an explicit multiplicative formula for classification losses remains an open challenge, the high-dimensional analysis of \citet{ben2024high} provides rigorous support: SGD trajectories concentrate layer by layer on the low-rank outlier eigenspaces of the empirical Hessian and gradient matrices, confirming that an eigenvalue-ordered selection process governs representations under classification losses and suggesting our analysis extends to classification regimes in future work.
\end{remark}

\section{Experimental details}
\label{app:exp}
\subsection{Experimental setup}
\paragraph{General settings.}
All datasets use a 70/10/20 train/validation/test split. We use early-stopping based on validation hypervolume across all methods. For a fair comparison, all methods share the same architecture. \ourmethod~additionally requires splitting the network into a representation learning and a preference alignment component, as described in the respective subsections below.
For COSMOS, we use all hyperparameters as reported in the original paper. 

Unless stated otherwise, all methods share the same training hyperparameters, including the Dirichlet sampling parameter $\vec{\alpha}$, number of epochs, and batch size, which mostly follow the setup in~\cite{ruchte2021scalable}.  All methods use Adam~\cite{kingma2014adam} as the optimizer. For \ourmethod, we use separate Adam optimizers for the leader and follower. To ensure convergence to an equilibrium point, the leader learning rate must be smaller than the follower's, we found that one order of magnitude difference works best. Based on that, we select the learning rate combination yielding the best validation hypervolume. Learning rates are decayed by a factor of 0.1 at dataset-specific milestones. For all other methods, hyperparameters are selected via grid search over validation hypervolume. 
The smooth Chebychev scalarization \cite{lin2024smooth} additionally requires a smoothing parameter $\tau$, set to 0.01 across all datasets, and the utopia point $\vec{u}^\star$. Since $\vec{u}^\star$ is generally unknown, we approximate it with a running minimum over objectives observed during training.

All experiments were conducted on an internal computing cluster with an automatic job assignment system, thus  we cannot ensure the specific CPU and GPU used for each particular experiment. However, all jobs were submitted with defined resource requirements: multi-dataset experiments requested 1 GPU and 1 CPU, FairFace and CelebA experiments requested 1 GPU and 4 CPUs, and tabular experiments requested no GPU and 2 CPUs. The required memory for each experiment is reported in the respective tables.

Our implementation can be found at~\url{https://github.com/ElinaCelik/mosel}.

\subsubsection{Tabular datasets}
\paragraph{Model.}  For the fairness use case, we consider three tabular datasets: Adult, COMPAS and Default. Following prior work~\cite{ruchte2021scalable}, we use an MLP with two hidden layers and hidden dimensions 60 and 25:
\texttt{Linear($d$, 60) $\rightarrow$ ReLU $\rightarrow$ Linear(60, 25) $\rightarrow$ ReLU $\rightarrow$ Linear(25, 1)}. 
\vspace{-5pt}
\paragraph{Hyperparameters.} The model is trained for 50 epochs with a batch size of 256, using $\vec{\alpha} = [0.5, 0.5]$ for Adult and COMPAS and  $\vec{\alpha} =[0.1,0.5]$ for Default, following the values of~\cite{ruchte2021scalable}.
The COSMOS learning rate is $10^{-3}$ for all tabular datasets without a learning rate scheduler. For \ourmethod, we use leader and follower learning rates of $[10^{-3},10^{-2}]$ for Adult and Default, and $[10^{-2},10^{-1}]$ for COMPAS. The learning rate is decayed by a factor of 0.1 at epochs 15 and 40 for Adult, epochs 30 and 40 for COMPAS, and epochs 30 and 45 for Default.
\vspace{-5pt}
\paragraph{Splitting the architecture.} This architecture provides two natural split points, namely before the second or third linear layer. Empirically, splitting before the last linear layer yields insufficient capacity in the preference alignment network to meaningfully incorporate the preference vector. Thus we split before the second linear layer across all datasets.

\subsubsection{Multi-Image classification}
\paragraph{Model.} For the Multi-MNIST, Multi-Fashion and Multi-Fashion+MNIST datasets we adopt the standard LeNet architecture~\cite{lecun1999object}. 
\vspace{-5pt}
\paragraph{Hyperparameters.} Following the values of~\cite{ruchte2021scalable}, the model is trained for 100 epochs with a batch size of 256 and $\vec{\alpha} = [1.2,1.2]$. The COSMOS learning rate is $10^{-3}$, decayed by a factor of 0.1 at epochs 20, 40, 80, and 90. For \ourmethod, we use leader and follower learning rates of $[10^{-4},10^{-3}]$ with the same scheduler milestones. A weight decay of $10^{-3}$ is applied to the leader to keep the impact of leader and follower balanced and improve training stability.
\vspace{-5pt}
\paragraph{Splitting the architecture.} We split the network immediately after flattening, using the convolutional trunk for representation learning which produces a 720-dimensional embedding $z$, that is then passed to the subsequent shared linear layer and two task-specific heads that form the preference alignment network.

\subsubsection{CelebA}
\paragraph{Model.} For CelebA, we use the pretrained EfficientNet-B4~\cite{pmlr-v97-tan19a}.
\paragraph{Hyperparameters.} Again, we follow the values of~\cite{ruchte2021scalable}, thus the model is finetuned for 30 epochs with a batch size of 32 and $\vec{\alpha} = [1,1]$. The COSMOS learning rate is $5 \times10^{-4}$, for \ourmethod, we use leader and follower learning rates of $[5 \times10^{-5},5 \times10^{-4}]$.
\paragraph{Splitting the architecture.}The modular block structure allows for many candidate split points. We select the split parameter by cross-validation on validation hypervolume. The optimal split was found after the 16th MBConv block, such that the convolutional stem and first 16 MBConv blocks form the representation learning network. The remaining MBConv blocks together with the convolutional head, pooling, and two-layer MLP task heads form the preference alignment network. \vspace{-3pt}

\subsubsection{FairFace}
\paragraph{Model.} For FairFace we use a pretrained ResNet-34~\cite{he2015deep} following~\cite{karkkainenfairface}. \vspace{-3pt}
\paragraph{Hyperparameters.} Since FairFace was not used in~\cite{ruchte2021scalable}, all hyperparameters are selected via cross-validation on validation hypervolume. The model is finetuned for 30 epochs with a batch size of 128 and $\vec{\alpha} = [0.5, 0.5]$. The COSMOS hyperparameters are $\lambda=0.1$ and learning rate of $10^{-4}$, decayed by a factor of 0.1 at epochs 20. For \ourmethod we use leader and follower learning rates of $[10^{-5},10^{-4}]$ with the same scheduler milestones.\vspace{-3pt}
\paragraph{Splitting the architecture.} %
Possible split points are restricted to boundaries between residual stages (layer1 - layer4), which gives a total of four possible split poitns. Again, we cross-validate this parameter on validation hypervolume. The optimal split was found after layer3, thus the convolutional stem together with the first three residual stages form the representation learning network, and layer4 together with the two-layer MLP task heads form the preference alignment network. \vspace{-3pt}

\subsection{Hypervolume computation}
\label{app:hv}
\vspace{-3pt}
\begin{wrapfigure}{r}{0.5\textwidth}
    \centering
    \includegraphics[width=\linewidth]{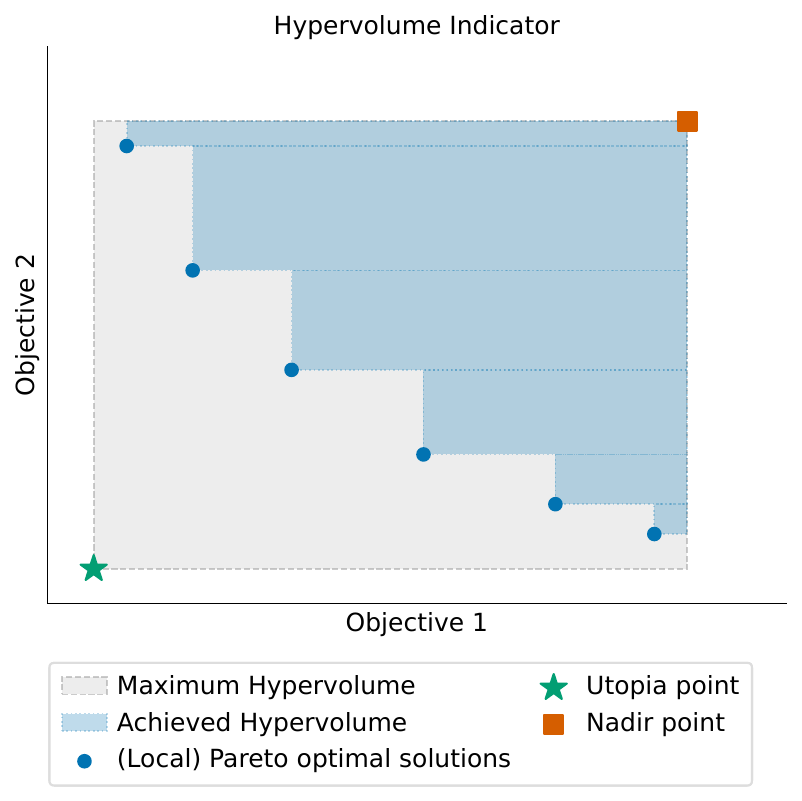}
    \caption{Hypervolume computation}
    \label{fig:hv}
\end{wrapfigure}
The hypervolume indicator measures the volume of the objective space dominated by a set of solutions and bounded by a reference point~\cite{zitzler2007hypervolume}. As illustrated in Fig.~\ref{fig:hv}, we compute it stepwise as a sum of rectangles, where each solution on the approximated Pareto front contributes a rectangular region bounded by its objective values and the nadir point. To enable comparison across datasets and methods, we report the \textit{relative hypervolume}, defined as the ratio of the achieved hypervolume (shown in blue) to the maximum possible hypervolume enclosed between the utopia and nadir points (shown in gray). The nadir point is set per dataset as the largest objective value observed across all methods plus a margin of 0.05, ensuring all solutions are strictly dominated. The utopia point is task-dependent: for multi-task learning it is set to (0,0), representing zero loss or zero error on all tasks, while for fairness it is set to (min error/min loss, 0), reflecting the best achievable error/loss at zero discrimination. A relative hypervolume of 1 thus corresponds to a solution set that perfectly covers the entire trade-off region between the utopia and nadir points. Note that the utopia point is in general not achievable in multi-objective optimization, as it would minimize all objectives simultaneously.

\subsection{Conditioning on the preference vector}
For the tabular datasets, we concatenate the preference vector $\vec{r}$ with the learned representation $z$ and pass both directly to the preference alignment network, which is sufficient when no spatial structure is present. 
For image data on the other hand, we condition on the preference vector $\vec{r}$ using a Feature-wise Linear Modulation (FiLM) layer~\cite{perez2017visual}, which has been shown to enable effective preference-dependent representation modulation in MOO settings~\cite{chen2022multi, dosovitskiy2019you}. Concretely, a linear layer projects the preference vector $\vec{r}$ to channel-wise scale and shift parameters $(\vec{\gamma},\vec{\beta})$, which modulate the representation $\vec{z}$ channel-wise as $\vec{z}' := \vec{\gamma}(\vec{r}) \odot \vec{z}+\vec{\beta}(\vec{r})$. This requires adding a single linear layer \texttt{Linear($dim(\vec{r})$, $2\cdot dim(\vec{z})$)}, introducing only $2\cdot dim(\vec{z}) \cdot (dim(\vec{\vec{r}}) + 1)$ additional parameters, which are a negligible overhead relative to the full network.

\section{Additional experimental results}
\subsection{Fairness use case}
Figure~\ref{fig:fairness_app} shows the Pareto fronts for all fairness datasets on the training objectives, complementing the main paper results. Table~\ref{tab:fairness_comp_app} reports the computational overhead of each method relative to the Classifier baseline in terms of parameter count, training time, CPU memory, and GPU memory. Both MOSEL variants add substantially fewer parameters than COSMOS across all datasets (+50 vs. +120 for tabular, +1,536 vs. +6,993 for FairFace), while incurring comparable or lower training time and memory overhead. GPU memory is reported for FairFace, tabular datasets were trained on CPU only. 

\begin{table}[ht]
    \centering
    \caption{Computational overhead on fairness datasets relative to classifier (bottom); \# Param.\ are absolute counts, time and CPU are multiplicative factors. Mean~$\pm$~std over five runs.}
    \label{tab:fairness_comp_app}
      \resizebox{\textwidth}{!}{%
    \begin{tabular}{c  c  c  c  c  c}
        \toprule
        Dataset & Method & $\Delta$\#Param. & Time (rel.) & CPU Mem (rel.) & GPU Mem (rel.) \\
        \midrule
        \multirow{3}{*}{Adult}
        & MOSEL-LS & +50  & $1.2 \pm 0.2$ & $1.10 \pm 0.07$ & -- \\
        & MOSEL-CS & +50  & $1.2 \pm 0.1$ & $1.06 \pm 0.06$ & -- \\
        & COSMOS   & +120 & $1.2 \pm 0.2$ & $1.09 \pm 0.03$ & -- \\
        \cmidrule(lr){2-6}
        & Classifier & 6,891 & $44.4 \pm 4.4$\,s & $10.30 \pm 0.34$\,MB & -- \\
        \midrule
        \multirow{3}{*}{COMPAS}
        & MOSEL-LS & +50  & $1.5 \pm 0.7$ & $1.00 \pm 0.03$ & -- \\
        & MOSEL-CS & +50  & $1.5 \pm 0.7$ & $0.96 \pm 0.03$ & -- \\
        & COSMOS   & +120 & $1.1 \pm 0.0$ & $0.99 \pm 0.03$ & -- \\
        \cmidrule(lr){2-6}
        & Classifier & 2,811 & $11.5 \pm 1.0$\,s & $11.43 \pm 0.46$\,MB & -- \\
        \midrule
        \multirow{3}{*}{Default}
        & MOSEL-LS & +50  & $1.2 \pm 0.3$ & $0.91 \pm 0.05$ & -- \\
        & MOSEL-CS & +50  & $1.0 \pm 0.1$ & $0.92 \pm 0.02$ & -- \\
        & COSMOS   & +120 & $1.0 \pm 0.1$ & $0.91 \pm 0.01$ & -- \\
        \cmidrule(lr){2-6}
        & Classifier & 7,011 & $33.2 \pm 5.0$\,s & $12.22 \pm 0.48$\,MB & -- \\
        \midrule
        \multirow{3}{*}{FairFace}
        & MOSEL-LS & +1,536 & $0.7 \pm 0.0$ & $0.88 \pm 0.04$ & $0.50 \pm 0.00$ \\
        & MOSEL-CS & +1,536 & $1.2 \pm 0.6$ & $0.93 \pm 0.09$ & $0.56 \pm 0.07$ \\
        & COSMOS   & +6,993 & $1.6 \pm 0.6$ & $1.05 \pm 0.11$ & $0.58 \pm 0.07$ \\
        \cmidrule(lr){2-6}
        & Classifier & 21,293,393 & $2927.9 \pm 1425.5$\,s & $8.89 \pm 0.04$\,MB & $8929.68 \pm 536.47$\,MB \\
        \bottomrule
    \end{tabular}}
\end{table}

\begin{figure}[ht]
    \centering
    \includegraphics[width=\linewidth]{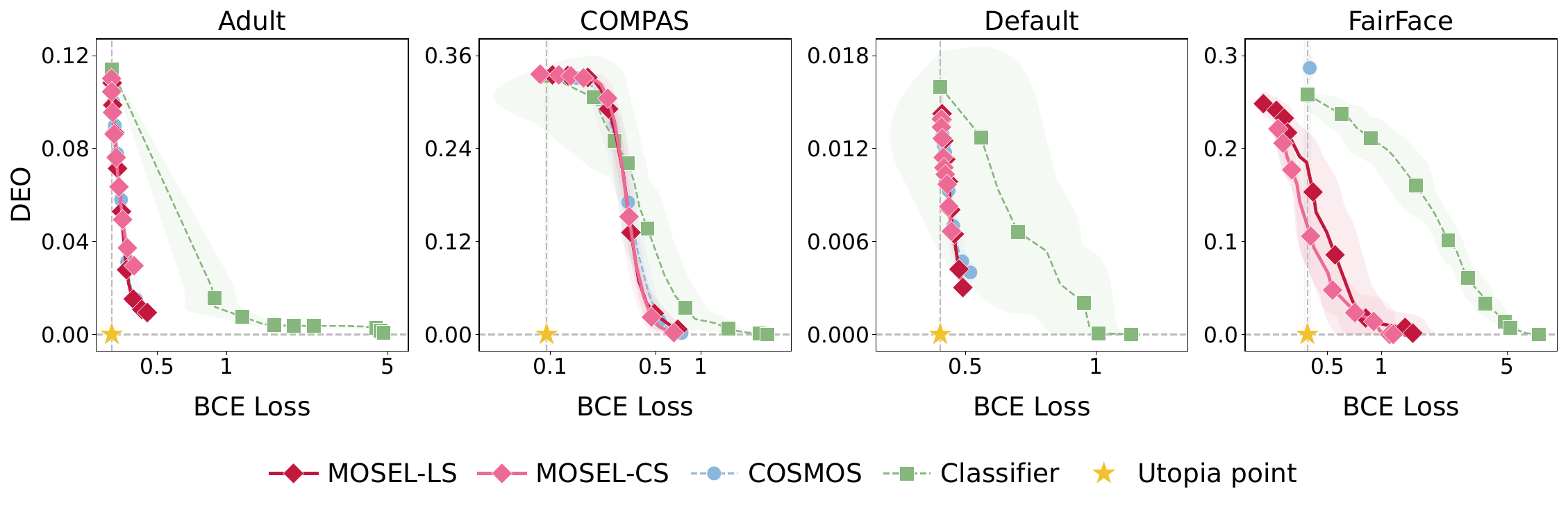}
    \caption{Test-time Pareto fronts on the fairness use case, showing training objectives (BCE loss vs DEO). The grey dotted lines mark zero unfairness and the best BCE loss achievable by an unconstrained classifier; their intersection defines the utopia point, the ideal but infeasible solution, toward which all methods aim to converge.}
    \label{fig:fairness_app}
\end{figure}

\subsection{MTL use case}
Figure~\ref{fig:mtl_app} shows the Pareto fronts for all MTL datasets on the training objectives, complementing the main paper results. Table~\ref{tab:per-task-acc} reports per-task accuracy at the solution closest to the utopia point, and Table~\ref{tab:mtl_app} reports hypervolume and number of Pareto optimal solutions for both loss and error metrics. Both MOSEL variants consistently achieve higher hypervolume than COSMOS and the other baselines across most datasets, with MOSEL-LS performing particularly strongly on the synthetic datasets and MOSEL-CS on CelebA. 

While COSMOS tends to produce more Pareto optimal solutions, MOSEL variants achieve better overall front quality as measured by hypervolume.
Table~\ref{tab:mtl_comp_app} reports computational overhead relative to the Equal weight baseline. Both MOSEL variants add more parameters than COSMOS (+4,320 vs. +708), which is attributable to the FiLM conditioning layer applied to the 720-dimensional representation $z$ in the LeNet architecture. However, this additional parameter count is solely due to preference conditioning and does not translate into meaningful increases in training time or memory consumption, with both remaining comparable to COSMOS and the Equal weight baseline.

\begin{table}[ht]
    \centering
    \caption{Per-task accuracy at the solution closest to the utopia point on the test Pareto front, across all MTL benchmarks. As joint training enables positive transfer, we take $(0, 0)$ as the utopia point and select the solution minimizing Euclidean distance to it, which can lie beyond the single-task baseline. TL and BR denote top-left and bottom-right tasks on the synthetic benchmarks. Mean~$\pm$~std over five independent runs; best results per column shaded.}
    \resizebox{\textwidth}{!}{%
    \begin{tabular}{c cc cc cc cc}
        \toprule
         & \multicolumn{2}{c}{Multi-MNIST} & \multicolumn{2}{c}{Multi-Fashion} & \multicolumn{2}{c}{Multi-Fashion+MNIST} & \multicolumn{2}{c}{CelebA} \\
        \cmidrule(lr){2-3} \cmidrule(lr){4-5} \cmidrule(lr){6-7} \cmidrule(lr){8-9}
        Method & Acc TL & Acc BR & Acc TL & Acc BR & Acc TL & Acc BR & Acc Goatee & Acc Mustache \\
        \midrule
        MOSEL-LS & \cellcolor{gray!30}0.919 $\pm$ 0.001 & \cellcolor{gray!30}0.901 $\pm$ 0.002 & \cellcolor{gray!30}0.829 $\pm$ 0.003 & 0.821 $\pm$ 0.001 & 0.930 $\pm$ 0.001 & \cellcolor{gray!30}0.851 $\pm$ 0.003 & 0.968 $\pm$ 0.002 & 0.964 $\pm$ 0.001 \\
        MOSEL-CS & 0.910 $\pm$ 0.002 & 0.895 $\pm$ 0.002 & 0.817 $\pm$ 0.004 & 0.812 $\pm$ 0.003 & 0.916 $\pm$ 0.003 & 0.845 $\pm$ 0.003 & \cellcolor{gray!30}0.970 $\pm$ 0.001 & \cellcolor{gray!30}0.966 $\pm$ 0.001 \\
        COSMOS & 0.907 $\pm$ 0.003 & 0.886 $\pm$ 0.002 & 0.824 $\pm$ 0.004 & 0.821 $\pm$ 0.004 & 0.915 $\pm$ 0.004 & 0.837 $\pm$ 0.005 & 0.960 $\pm$ 0.005 & 0.959 $\pm$ 0.004 \\
        PCGrad & 0.914 $\pm$ 0.006 & 0.894 $\pm$ 0.004 & 0.814 $\pm$ 0.004 & \cellcolor{gray!30}0.828 $\pm$ 0.001 & 0.928 $\pm$ 0.003 & 0.845 $\pm$ 0.007 & 0.954 $\pm$ 0.000 & 0.961 $\pm$ 0.000 \\
        MGDA & 0.912 $\pm$ 0.009 & 0.897 $\pm$ 0.006 & 0.822 $\pm$ 0.004 & 0.820 $\pm$ 0.002 & 0.919 $\pm$ 0.002 & \cellcolor{gray!30}0.852 $\pm$ 0.002 & 0.967 $\pm$ 0.001 & 0.961 $\pm$ 0.000 \\
        GradNorm & 0.913 $\pm$ 0.005 & 0.897 $\pm$ 0.002 & 0.821 $\pm$ 0.006 & 0.820 $\pm$ 0.006 & 0.914 $\pm$ 0.011 & \cellcolor{gray!30}0.855 $\pm$ 0.004 & 0.967 $\pm$ 0.002 & 0.964 $\pm$ 0.001 \\
        FAMO & 0.909 $\pm$ 0.010 & 0.894 $\pm$ 0.006 & 0.818 $\pm$ 0.005 & 0.818 $\pm$ 0.005 & \cellcolor{gray!30}0.938 $\pm$ 0.002 & 0.838 $\pm$ 0.007 & 0.967 $\pm$ 0.001 & \cellcolor{gray!30}0.965 $\pm$ 0.001 \\
        NashMTL & 0.916 $\pm$ 0.003 & 0.891 $\pm$ 0.006 & 0.824 $\pm$ 0.002 & 0.816 $\pm$ 0.002 & 0.924 $\pm$ 0.011 & 0.846 $\pm$ 0.013 & 0.967 $\pm$ 0.001 & 0.964 $\pm$ 0.001 \\
        Equal weight & 0.913 $\pm$ 0.006 & 0.895 $\pm$ 0.003 & 0.824 $\pm$ 0.007 & 0.820 $\pm$ 0.007 & 0.929 $\pm$ 0.003 & 0.846 $\pm$ 0.007 & 0.963 $\pm$ 0.005 & 0.963 $\pm$ 0.002 \\
        Single Task & 0.914 $\pm$ 0.001 & 0.889 $\pm$ 0.004 & 0.823 $\pm$ 0.005 & 0.811 $\pm$ 0.006 & \cellcolor{gray!30}0.939 $\pm$ 0.003 & 0.841 $\pm$ 0.004 & \cellcolor{gray!30}0.969 $\pm$ 0.002 & 0.964 $\pm$ 0.002 \\
        \bottomrule
    \end{tabular}
    }
    \label{tab:per-task-acc}
\end{table}

\begin{table}[ht]
    \centering
    \caption{Computational overhead on MTL datasets relative to equal weight training (bottom); \\\# Param.\ are absolute counts, time and CPU are multiplicative factors. Mean~$\pm$~std over five runs.}
    \label{tab:mtl_comp_app}
    \resizebox{\textwidth}{!}{%
    \begin{tabular}{c  c  c  c  c  c}
        \toprule
        Dataset & Method & $\Delta$\#Param. & Time (rel.) & CPU Mem (rel.) & GPU Mem (rel.) \\
        \midrule
        \multirow{10}{*}{Multi-MNIST}
        & MOSEL-LS    & +4,320 & $1.1 \pm 0.2$ & $1.15 \pm 0.05$ & $1.01 \pm 0.00$ \\
        & MOSEL-CS    & +4,320 & $1.2 \pm 0.2$ & $1.09 \pm 0.09$ & $1.00 \pm 0.00$ \\
        & COSMOS      & +708   & $1.3 \pm 0.2$ & $1.07 \pm 0.02$ & $1.09 \pm 0.03$ \\
        & PCGrad$^*$  & +0     & $6.2 \pm 0.6$ & $1.22 \pm 0.04$ & $1.00 \pm 0.00$ \\
        & MGDA        & +0     & $2.0 \pm 0.4$ & $1.04 \pm 0.06$ & $1.12 \pm 0.02$ \\
        & GradNorm & +0 & 1.9 $\pm$ 0.2 & 1.07 $\pm$ 0.05 & 1.00 $\pm$ 0.00 \\
        & FAMO & +0 & 1.2 $\pm$ 0.1 & 1.04 $\pm$ 0.04 & 1.00 $\pm$ 0.00 \\
        & NashMTL & +0 & 6.7 $\pm$ 1.0 & 0.76 $\pm$ 0.02 & 1.01 $\pm$ 0.00 \\
        & Single Task$^*$ & +0     & $1.6 \pm 0.1$ & $1.14 \pm 0.04$ & $1.00 \pm 0.00$ \\
        \cmidrule(lr){2-6}
        & Equal weight & 42,350 & $428.8 \pm 75.1$\,s & $14.65 \pm 1.18$\,MB & $63.97 \pm 0.00$\,MB \\
        \midrule
        \multirow{10}{*}{Multi-Fashion}
        & MOSEL-LS    & +4,320 & $1.1 \pm 0.3$ & $1.13 \pm 0.04$ & $0.99 \pm 0.04$ \\
        & MOSEL-CS    & +4,320 & $1.1 \pm 0.1$ & $1.06 \pm 0.06$ & $0.96 \pm 0.09$ \\
        & COSMOS      & +708   & $1.5 \pm 0.2$ & $1.07 \pm 0.08$ & $1.11 \pm 0.04$ \\
        & PCGrad$^*$  & +0     & $7.2 \pm 1.5$ & $1.25 \pm 0.05$ & $1.00 \pm 0.00$ \\
        & MGDA        & +0     & $2.1 \pm 0.4$ & $1.01 \pm 0.04$ & $1.12 \pm 0.02$ \\
        & GradNorm & +0 & 2.0 $\pm$ 0.2 & 1.09 $\pm$ 0.09 & 1.00 $\pm$ 0.00 \\
        & FAMO & +0 & 1.2 $\pm$ 0.1 & 1.10 $\pm$ 0.08 & 1.00 $\pm$ 0.00 \\
        & NashMTL & +0 & 7.1 $\pm$ 1.1 & 0.76 $\pm$ 0.02 & 1.01 $\pm$ 0.00 \\
        & Single Task$^*$ & +0     & $1.7 \pm 0.1$ & $1.14 \pm 0.04$ & $1.00 \pm 0.00$ \\
        \cmidrule(lr){2-6}
        & Equal weight & 42,350 & $401.7 \pm 45.8$\,s & $14.66 \pm 0.90$\,MB & $63.97 \pm 0.01$\,MB \\
        \midrule
        \multirow{10}{*}{Multi-Fashion+MNIST}
        & MOSEL-LS    & +4,320 & $1.4 \pm 0.1$ & $1.01 \pm 0.02$ & $1.01 \pm 0.00$ \\
        & MOSEL-CS    & +4,320 & $1.4 \pm 0.1$ & $0.96 \pm 0.07$ & $1.00 \pm 0.00$ \\
        & COSMOS      & +708   & $1.5 \pm 0.2$ & $0.97 \pm 0.05$ & $1.09 \pm 0.03$ \\
        & PCGrad$^*$  & +0     & $7.4 \pm 1.0$ & $1.15 \pm 0.03$ & $1.00 \pm 0.00$ \\
        & MGDA        & +0     & $1.9 \pm 0.6$ & $0.93 \pm 0.07$ & $1.13 \pm 0.01$ \\
        & GradNorm & +0  & 2.1 $\pm$ 0.2 & 0.96 $\pm$ 0.05 & 1.00 $\pm$ 0.00 \\
        & FAMO & +0  & 1.3 $\pm$ 0.1 & 0.98 $\pm$ 0.06 & 1.00 $\pm$ 0.00 \\
        & NashMTL & +0  & 7.7 $\pm$ 1.3 & 0.71 $\pm$ 0.01 & 1.01 $\pm$ 0.00 \\
        & Single Task$^*$ & +0     & $1.9 \pm 0.0$ & $1.01 \pm 0.05$ & $1.00 \pm 0.00$ \\
        \cmidrule(lr){2-6}
        & Equal weight & 42,350 & $377.7 \pm 55.1$\,s & $15.97 \pm 0.82$\,MB & $63.97 \pm 0.00$\,MB \\
        \bottomrule
        \multicolumn{6}{l}{$^*$PCGrad requires an independent run for each of the five preference vectors, SingleTask requires an independent run for each task}
    \end{tabular}}
\end{table}

\begin{table}[t]
    \centering
    \caption{Relative hypervolume and number of Pareto stationary solutions on the MTL benchmarks, computed on training loss and test error. Mean~$\pm$~std over five runs; best entries per column shaded.}
    \label{tab:mtl_app}
    \resizebox{\textwidth}{!}{%
    \begin{tabular}{c  c cc  cc }
    \toprule
       Dataset & Method & HV Loss $\uparrow$ & HV Error $\uparrow$ & \# PS sol. (Loss)  $\uparrow$ & \# PS sol. (Error)  $\uparrow$ \\
        \midrule
        \multirow{10}{*}{Multi-MNIST} & MOSEL-LS & \cellcolor{gray!30}0.330 $\pm$ 0.010 & \cellcolor{gray!30}0.441 $\pm$ 0.008 & 10.6 $\pm$ 0.8 & 9.0 $\pm$ 0.6 \\
         & MOSEL-CS & 0.318 $\pm$ 0.008 & 0.426 $\pm$ 0.009 & 17.8 $\pm$ 0.4 & \cellcolor{gray!30}16.8 $\pm$ 1.2 \\
         & COSMOS & 0.303 $\pm$ 0.008 & 0.418 $\pm$ 0.007 & \cellcolor{gray!30}20.0 $\pm$ 1.1 & \cellcolor{gray!30}19.0 $\pm$ 3.0 \\
         & PCGrad & 0.274 $\pm$ 0.017 & 0.425 $\pm$ 0.010 & 2.2 $\pm$ 0.7 & 2.8 $\pm$ 0.4 \\
         & MGDA & 0.267 $\pm$ 0.026 & 0.409 $\pm$ 0.034 & 1.0 $\pm$ 0.0 & 1.0 $\pm$ 0.0 \\
         & GradNorm & 0.281 $\pm$ 0.010 & 0.410 $\pm$ 0.016 & 1.0 $\pm$ 0.0 & 1.0 $\pm$ 0.0 \\
         & FAMO & 0.265 $\pm$ 0.029 & 0.394 $\pm$ 0.033 & 1.0 $\pm$ 0.0 & 1.0 $\pm$ 0.0 \\
         & NashMTL & 0.271 $\pm$ 0.015 & 0.401 $\pm$ 0.021 & 1.0 $\pm$ 0.0 & 1.0 $\pm$ 0.0 \\
         & Equal weight & 0.265 $\pm$ 0.008 & 0.406 $\pm$ 0.022 & 1.0 $\pm$ 0.0 & 1.0 $\pm$ 0.0 \\
         & Single Task & 0.264 $\pm$ 0.011 & 0.393 $\pm$ 0.010 & 1.0 $\pm$ 0.0 & 1.0 $\pm$ 0.0 \\
        \midrule
        \multirow{10}{*}{Multi-Fashion} & MOSEL-LS & \cellcolor{gray!30}0.094 $\pm$ 0.004 & 0.171 $\pm$ 0.004 & 10.8 $\pm$ 1.0 & 8.8 $\pm$ 2.9 \\
         & MOSEL-CS & 0.086 $\pm$ 0.004 & 0.158 $\pm$ 0.007 & \cellcolor{gray!30}17.4 $\pm$ 0.8 & \cellcolor{gray!30}16.8 $\pm$ 1.3 \\
         & COSMOS & \cellcolor{gray!30}0.097 $\pm$ 0.006 & \cellcolor{gray!30}0.177 $\pm$ 0.010 & \cellcolor{gray!30}17.8 $\pm$ 0.7 & \cellcolor{gray!30}16.4 $\pm$ 1.9 \\
         & PCGrad & 0.070 $\pm$ 0.007 & \cellcolor{gray!30}0.179 $\pm$ 0.005 & 2.2 $\pm$ 0.7 & 3.6 $\pm$ 0.5 \\
         & MGDA & 0.073 $\pm$ 0.004 & 0.158 $\pm$ 0.007 & 1.0 $\pm$ 0.0 & 1.0 $\pm$ 0.0 \\
         & GradNorm & 0.073 $\pm$ 0.002 & 0.155 $\pm$ 0.004 & 1.0 $\pm$ 0.0 & 1.0 $\pm$ 0.0 \\
         & FAMO & 0.071 $\pm$ 0.004 & 0.150 $\pm$ 0.005 & 1.0 $\pm$ 0.0 & 1.0 $\pm$ 0.0 \\
         & NashMTL & 0.069 $\pm$ 0.004 & 0.154 $\pm$ 0.005 & 1.0 $\pm$ 0.0 & 1.0 $\pm$ 0.0 \\
         & Equal weight & 0.072 $\pm$ 0.009 & 0.161 $\pm$ 0.019 & 1.0 $\pm$ 0.0 & 1.0 $\pm$ 0.0 \\
         & Single Task & 0.072 $\pm$ 0.008 & 0.146 $\pm$ 0.013 & 1.0 $\pm$ 0.0 & 1.0 $\pm$ 0.0 \\
        \midrule
        \multirow{10}{*}{Multi-Fashion+MNIST} & MOSEL-LS & \cellcolor{gray!30}0.295 $\pm$ 0.012 & 0.376 $\pm$ 0.012 & 10.4 $\pm$ 0.8 & 8.6 $\pm$ 5.0 \\
         & MOSEL-CS & 0.281 $\pm$ 0.010 & 0.360 $\pm$ 0.009 & 17.4 $\pm$ 0.5 & \cellcolor{gray!30}19.4 $\pm$ 1.9 \\
         & COSMOS & 0.270 $\pm$ 0.016 & 0.350 $\pm$ 0.016 & \cellcolor{gray!30}22.0 $\pm$ 1.3 & \cellcolor{gray!30}19.0 $\pm$ 1.1 \\
         & PCGrad & 0.263 $\pm$ 0.010 & \cellcolor{gray!30}0.401 $\pm$ 0.008 & 2.4 $\pm$ 0.8 & 3.8 $\pm$ 0.4 \\
         & MGDA & 0.237 $\pm$ 0.009 & 0.354 $\pm$ 0.008 & 1.0 $\pm$ 0.0 & 1.0 $\pm$ 0.0 \\
         & GradNorm & 0.256 $\pm$ 0.011 & 0.355 $\pm$ 0.008 & 1.0 $\pm$ 0.0 & 1.0 $\pm$ 0.0 \\
         & FAMO & 0.244 $\pm$ 0.016 & 0.344 $\pm$ 0.019 & 1.0 $\pm$ 0.0 & 1.0 $\pm$ 0.0 \\
         & NashMTL & 0.238 $\pm$ 0.040 & 0.346 $\pm$ 0.049 & 1.0 $\pm$ 0.0 & 1.0 $\pm$ 0.0 \\
         & Equal weight & 0.245 $\pm$ 0.015 & 0.354 $\pm$ 0.019 & 1.0 $\pm$ 0.0 & 1.0 $\pm$ 0.0 \\
         & Single Task & 0.254 $\pm$ 0.018 & 0.351 $\pm$ 0.010 & 1.0 $\pm$ 0.0 & 1.0 $\pm$ 0.0 \\
        \midrule
        \multirow{10}{*}{CelebA} & MOSEL-LS & 0.555 $\pm$ 0.013 & 0.531 $\pm$ 0.019 & 22.8 $\pm$ 3.0 & 3.2 $\pm$ 1.2 \\
         & MOSEL-CS & \cellcolor{gray!30}0.605 $\pm$ 0.004 & \cellcolor{gray!30}0.556 $\pm$ 0.007 & 8.0 $\pm$ 6.7 & 4.6 $\pm$ 1.7 \\
         & COSMOS & 0.565 $\pm$ 0.010 & 0.536 $\pm$ 0.008 & \cellcolor{gray!30}25.0 $\pm$ 0.0 & \cellcolor{gray!30}13.0 $\pm$ 2.4 \\
         & PCGrad & 0.214 $\pm$ 0.000 & 0.427 $\pm$ 0.000 & 1.8 $\pm$ 0.7 & 1.0 $\pm$ 0.0 \\
         & MGDA & 0.248 $\pm$ 0.089 & 0.510 $\pm$ 0.006 & 1.0 $\pm$ 0.0 & 1.0 $\pm$ 0.0 \\
         & GradNorm & 0.508 $\pm$ 0.012 & 0.525 $\pm$ 0.016 & 1.0 $\pm$ 0.0 & 1.0 $\pm$ 0.0 \\
         & FAMO & 0.506 $\pm$ 0.006 & 0.532 $\pm$ 0.012 & 1.0 $\pm$ 0.0 & 1.0 $\pm$ 0.0 \\
         & NashMTL & 0.285 $\pm$ 0.057 & 0.527 $\pm$ 0.010 & 1.0 $\pm$ 0.0 & 1.0 $\pm$ 0.0 \\
         & Equal weight & 0.420 $\pm$ 0.120 & 0.496 $\pm$ 0.036 & 1.0 $\pm$ 0.0 & 1.0 $\pm$ 0.0 \\
         & Single Task & 0.491 $\pm$ 0.096 & 0.536 $\pm$ 0.021 & 1.0 $\pm$ 0.0 & 1.0 $\pm$ 0.0 \\
        \bottomrule
     \end{tabular}}
\end{table}
\vspace{20pt}
\begin{figure}[H]
    \centering
    \includegraphics[width=\linewidth]{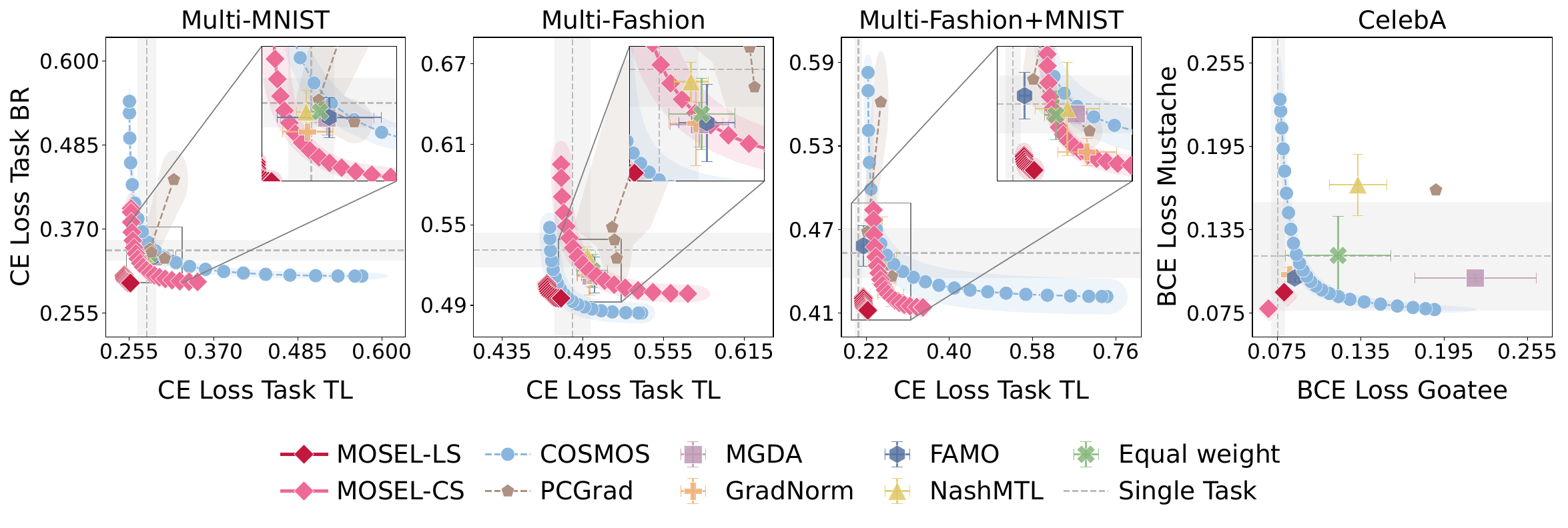}
    \caption{Test-time Pareto fronts on the MTL use case, showing the trade-offs between task-specific (Binary) Cross-Entropy losses. Grey dotted lines indicate best achievable single-task performance on each task; their intersection marks the single-task baseline. In this weakly conflicting setting, positive transfer allows methods to surpass this baseline, with \ourmethod~achieving the largest improvements.}
    \label{fig:mtl_app}
\end{figure}

\newpage
\section{Ablations}
\subsection{Dirichlet parameter $\alpha$}
To isolate the effect of the Dirichlet parameter $\alpha$ on the quality and diversity of the recovered Pareto front, we conduct an ablation study by varying $\alpha$ across four datasets, two from each use case, under both linear scalarization (LS) and Chebyshev scalarization (CS). Results are reported in Table~\ref{tab:alpha}.

The sensitivity to $\alpha$ varies across datasets and scalarization functions. In strongly conflicting settings, performance degrades substantially as $\alpha$ increases, particularly under LS: on Adult, HV drops from $0.939$ at $\alpha=0.5$ to $0.816$ at $\alpha=1.2$, while on COMPAS the decline is even more dramatic (from $0.920$ to $0.164$), indicating a near-total failure to discover diverse trade-off solutions when preference samples concentrate near the simplex center. 
Under CS, Adult follows a similar trend (HV from $0.819$ to $0.524$), whereas COMPAS remains remarkably stable with HV above $0.920$ across all $\alpha$ values. In contrast, the two MTL datasets are largely insensitive to $\alpha$: on {Multi-Fashion}, HV and HV MCR remain effectively constant, and on {Multi-Fashion+MNIST}, only a mild upward trend is observed as $\alpha$ increases. \\
Taken together, these results highlight two key findings. First, the impact of $\alpha$ is most pronounced on datasets where the objectives are strongly conflicting, and where the scalarization function is sensitive to the geometry of the preference simplex (LS more so than CS). Second, even on tasks where hypervolume remains stable, increasing $\alpha$ systematically contracts the extreme solutions, reducing the diversity of the recovered front---an effect that aggregate metrics can obscure. For the MTL setting, this behavior aligns with the theoretical analysis in Appendix~\ref{app:regulariaztion}, which shows that smaller $\alpha$ values (below $1$) increase the variance of stochastic preference sampling, strengthening the implicit bias toward learning features shared across objectives and thereby promoting greater Pareto-front diversity. On the strongly conflicting benchmarks, the effect of $\alpha$ is empirically even more pronounced, yet falls outside the assumptions of our current theoretical framework---extending the analysis to such settings remains an interesting direction for future work. Overall, $\alpha=0.5$ serves as a robust default: it achieves the best or near-best hypervolume across all dataset--scalarization combinations and consistently recovers the widest spread of extreme solutions.

\begin{table}[ht]
    \centering
    \caption{\textbf{Ablation on Dirichlet parameter $\alpha$.} Relative hypervolume on two strongly and two weakly conflicting datasets computed on training loss (HV) and test error (HV MCR). Mean $\pm$ std over five runs. Mean extreme solutions (Obj$_1$-best, Obj$_2$-best) are reported for varying $\alpha$ and scalarization.}
    \label{tab:alpha}
    \resizebox{\linewidth}{!}{
    \begin{tabular}{ccccccc}
         \toprule
         Dataset & Scalarization & $\alpha$ & HV $\uparrow$ & HV MCR $\uparrow$ & Obj$_1$-best (mean) & Obj$_2$-best (mean)  \\
         \midrule
         \multirow{6}{*}{Adult} &  & 0.5 & 0.939 $\pm$ 0.007 & 0.849 $\pm$ 0.012 & (0.143,0.106) & (0.211,0.009) \\
         & LS & 0.8 & 0.912 $\pm$ 0.011 & 0.838 $\pm$ 0.019 & (0.144,0.105) & (0.202,0.014) \\
         &  & 1.2 & 0.816 $\pm$ 0.015 & 0.774 $\pm$ 0.009 & (0.143,0.105) & (0.186,0.030) \\
         \cmidrule{2-7} 
         &  & 0.5 & 0.819 $\pm$ 0.017 & 0.776 $\pm$ 0.015 & (0.142,0.106) & (0.188,0.030) \\
         & CS & 0.8 & 0.702  $\pm$ 0.019 & 0.683 $\pm$ 0.022 & (0.142,0.105) & (0.172,0.049) \\
         &  & 1.2 & 0.524 $\pm$ 0.023 & 0.525 $\pm$ 0.025 & (0.142,0.106) & (0.153,0.079) \\
        \midrule
        \multirow{6}{*}{COMPAS} &  & 0.5 & 0.920 $\pm$ 0.019 & 0.568 $\pm$ 0.008 & (0.021,0.334) & (0.439,0.007) \\
         & LS & 0.8 & 0.541 $\pm$ 0.193 & 0.434 $\pm$ 0.117 & (0.021,0.334) & (0.218,0.169) \\
         &  & 1.2 & 0.164 $\pm$ 0.051 & 0.161 $\pm$ 0.049 & (0.020,0.335) & (0.033,0.324) \\
         \cmidrule{2-7} 
         &  & 0.5 & 0.930 $\pm$ 0.007 & 0.560 $\pm$ 0.005 & (0.021,0.335) & (0.444,0.002) \\
         & CS & 0.8 & 0.927 $\pm$ 0.005 & 0.563 $\pm$ 0.009 & (0.021,0.335) & (0.441,0.005) \\
         &  & 1.2 & 0.921 $\pm$ 0.007 & 0.558 $\pm$ 0.014 & (0.021,0.334) & (0.436,0.008) \\
        \midrule
        \multirow{6}{*}{Multi-Fashion} &  & 0.5 & 0.269 $\pm$ 0.003 & 0.683 $\pm$ 0.003 & (0.169,0.192) & (0.180,0.178) \\
         & LS & 1.0 & 0.267 $\pm$ 0.006 & 0.680 $\pm$ 0.004 & (0.171,0.185) & (0.176,0.180) \\
         &  & 1.2 & 0.268 $\pm$ 0.004 & 0.683 $\pm$ 0.003 & (0.170,0.187) & (0.174,0.178) \\
         \cmidrule{2-7} 
         &  & 0.5 & 0.261 $\pm$ 0.005 & 0.678 $\pm$ 0.006 & (0.172,0.261) & (0.258,0.180) \\
         & CS & 1.0 & 0.260 $\pm$ 0.005 & 0.677 $\pm$ 0.003 & (0.174,0.221) & (0.228,0.180) \\
         &  & 1.2 & 0.260 $\pm$ 0.005 & 0.676 $\pm$ 0.004 & (0.174,0.219) & (0.218,0.181) \\
        \midrule
        \multirow{6}{*}{Multi-F+MNIST} &  & 0.5 & 0.450 $\pm$ 0.011 & 0.789 $\pm$ 0.004 & (0.068,0.165) & (0.081,0.153) \\
         & LS & 1.0 & 0.454 $\pm$ 0.010 & 0.791 $\pm$ 0.005 & (0.069,0.156) & (0.075,0.151) \\
         &  & 1.2 & 0.462 $\pm$ 0.010 & 0.795 $\pm$ 0.005 & (0.068,0.151) & (0.075,0.147) \\
         \cmidrule{2-7} 
         &  & 0.5 & 0.439 $\pm$ 0.012 & 0.782 $\pm$ 0.005 & (0.075,0.212) & (0.225,0.154) \\
         & CS & 1.0 & 0.444 $\pm$ 0.011 & 0.785 $\pm$ 0.006 & (0.074,0.185) & (0.138,0.152) \\
         &  & 1.2 & 0.447 $\pm$ 0.009 & 0.787 $\pm$ 0.004 & (0.075,0.180) & (0.125,0.148) \\
        \bottomrule
    \end{tabular}}
\end{table}

\begin{table}[ht]
    \caption{\textbf{Ablation on split point. } The split is represented by the number of allocated blocks to representation learning (RL) and preference alignment (PA) network respectively. We report obtained HV in training and evaluation space as well as extreme points.}
    \label{tab:split-point}
    \centering
    \begin{tabular}{cccccc}
        \toprule
        Dataset & RL / PA & HV $\uparrow$ & HV MCR $\uparrow$ & Obj$_1$-best & Obj$_2$-best \\
        \midrule
        \multirow{7}{*}{CelebA} & ~3~ / 35 & 0.213 & 0.396 & (0.075,0.080) & (0.076,0.079) \\
        & ~7~ / 31 & 0.238 & 0.406 & (0.070,0.077) & (0.071,0.076) \\
        & 11 / 27 & 0.232 & 0.401 & (0.070,0.079) & (0.071,0.078) \\
        & 17 / 21 & 0.242 & 0.415 & (0.068,0.079) & (0.071,0.076) \\
        & 23 / 15 & 0.242 & 0.415 & (0.068,0.077) & (0.069,0.077) \\
        & 31 / ~7~ & 0.238 & 0.422 & (0.069,0.078) & (0.071,0.076) \\ 
        & 33 / ~5~ & 0.222 & 0.402 & (0.072,0.086) & (0.077,0.079) \\
        \midrule
        \multirow{4}{*}{FairFace}  & 5 / 6 & 0.595 & 0.562 & (0.264,0.239) & (0.720,0.016) \\
        & 6 / 5 & 0.616 & 0.561 & (0.243,0.236) & (0.686,0.008) \\
        & 7 / 4 & 0.627 & 0.541 & (0.235,0.235) & (0.728,0.000) \\
        & 8 / 3 & 0.627 & 0.587 & (0.202,0.255) & (0.481,0.027) \\
        \bottomrule
    \end{tabular}
\end{table}

\begin{table}[t]
    \centering
    \caption{\textbf{Ablation on learning rate ratio. } Sensitivity of MOSEL to the follower/leader learning rate ratio on Adult. The leader learning rate is fixed at $\eta_{\text{leader}} = 0.001$. The default ratio is marked in gray; best values per column are in bold. MCR-best and DEO-best report the values of the solution that is best on the respective objective.}
    \label{tab:lr_ratio}
    \small
    \begin{tabular}{rcccccc}
        \toprule
        Ratio & $\eta_{\text{leader}}$ & $\eta_{\text{follower}}$ & HV $\uparrow$ & HV MCR $\uparrow$ & MCR-best & Obj$_2$DEO-best \\
        \midrule
        $2\times$   & 0.001 & 0.002 & 0.933          & 0.849          & (0.143, 0.109) & (0.209, 0.011) \\
        $5\times$   & 0.001 & 0.005 & \textbf{0.939} & \textbf{0.853} & (0.143, 0.108) & (0.211, 0.010) \\
        \rowcolor{gray!15}
        $10\times$  & 0.001 & 0.010 & 0.935          & 0.845          & (0.143, 0.106) & (0.211, 0.010) \\
        $20\times$  & 0.001 & 0.020 & \textbf{0.939} & \textbf{0.853} & (0.143, 0.107) & (0.211, 0.009) \\
        $50\times$  & 0.001 & 0.050 & 0.876          & 0.807          & (0.145, 0.104) & (0.196, 0.020) \\
        $100\times$ & 0.001 & 0.100 & 0.755          & 0.715          & (0.146, 0.098) & (0.179, 0.040) \\
        \bottomrule
    \end{tabular}
\end{table}

\subsection{Split point}
To further analyze the impact of the split point on performance, we conducted ablations on ResNet, EfficientNet, and DistilBERT (see App.~\ref{app:transformer}).
The results for ResNet and EfficientNet for FairFace and CelebA respectively, are shown in Table~\ref{tab:split-point}.

On CelebA (EfficientNet-B4, 38 total blocks), performance varies primarily at the two extremes of the allocation spectrum. Assigning nearly all capacity to the preference alignment network (3\,/\,35) yields the lowest hypervolume in both training and evaluation, as the representation learning network lacks sufficient depth to learn transferable representations. 

Conversely, restricting the preference alignment network to only 5 blocks (33\,/\,5) also degrades performance, as the preference alignment network can no longer adequately specialize to diverse preference vectors. Between these extremes, performance remains stable: splits ranging from 7\,/\,31 to 31\,/\,7 all achieve comparable hypervolume, indicating broad robustness to the precise split location.
On FairFace (ResNet-34, 11 total blocks), we observe a similar pattern. Hypervolume increases modestly from $0.595$ at split 5\,/\,6 to $0.627$ at 7\,/\,4 and 8\,/\,3, the performance differences across all four tested splits remain small. %

Across all architectures, performance is robust once the preference alignment network has sufficient capacity, indicating that the split point is not a critical hyperparameter. Based on these results, we recommend allocating approximately the final 20–25\% of the blocks to the preference alignment network, while assigning the remaining blocks to the representation learning network. This simple guideline consistently performs well across CNNs and Transformers, substantially reducing the need for extensive split point tuning. 

\begin{table}[t]
    \centering
    \caption{Results on DistilBERT text-classification. Results show HV and extreme points for the front for different split points (splits between transformer blocks)}
    \label{tab:transformer}
    \resizebox{\textwidth}{!}{%
    \begin{tabular}{cccccccc}
        \toprule
         & & & \multicolumn{2}{c}{Fairness extreme} & & \multicolumn{2}{c}{Accuracy extreme} \\
        \cmidrule{4-5} \cmidrule{7-8}
         RL / PA & HV $\uparrow$& HV MCR $\uparrow$ & Acc@MinDEO(\%) $\uparrow$ & MinDEO $\downarrow$ & & DEO@MaxAcc $\downarrow$ & MaxAcc(\%)$\uparrow$ \\
         \midrule
        5 / 1 & 0.423 & 0.479 & 88.68 & 0.0000 & & 0.037 & 92.22 \\
        4 / 2 & 0.419 & 0.477 & 88.69 & 0.0001 & & 0.035 & 92.17 \\
        3 / 3 & 0.419 & 0.475 & 88.70 & 0.0002 & & 0.036 & 92.12 \\
        \bottomrule
    \end{tabular}
    }\vspace{-10pt}
\end{table}

\subsection{Learning rate ratio}
To analyze the impact of the learning rate ratio, we keep the leader (preference alignment) network at the standard learning rate and vary the follower/leader ratio on the example of the Adult dataset. Table~\ref{tab:lr_ratio} shows that MOSEL is robust for ratios between 2× and 20×, with degradation only at 50×+, where an overly aggressive follower destabilizes training. Our default 10× ratio lies comfortably within this robust regime and consistently performs well without additional tuning.

\subsection{Number of objectives}
To validate the scalability of our method to many objectives, we conduct an ablation study on CelebA as presented in Figure~\ref{fig:celeba}. To this end, we identify the most correlated (\texttt{'Heavy Makeup'} and \texttt{'Wearing Lipstick'}) and least correlated (\texttt{'Pale Skin'} and \texttt{'Wearing Necklace'}) task pairs from the full set of 40 CelebA attributes. Starting from each pair, we iteratively add the next most correlated or least correlated task, scaling up to 10 objectives per set. Task correlation is determined via Pearson correlation coefficients $\rho$ computed over the attribute labels, greedily selecting tasks that maximize (i.e., $|\rho|\approx1$) or minimize (i.e., $|\rho|\approx0$) the mean absolute correlation with the already selected set. All configurations are trained following the same pipeline as the main CelebA experiments. Since hypervolume computation scales exponentially with the number of objectives, we restrict evaluation to the closest point to utopia,
\begin{wrapfigure}[14]{l}{0.5\textwidth}
    \centering
    \includegraphics[width=\linewidth]{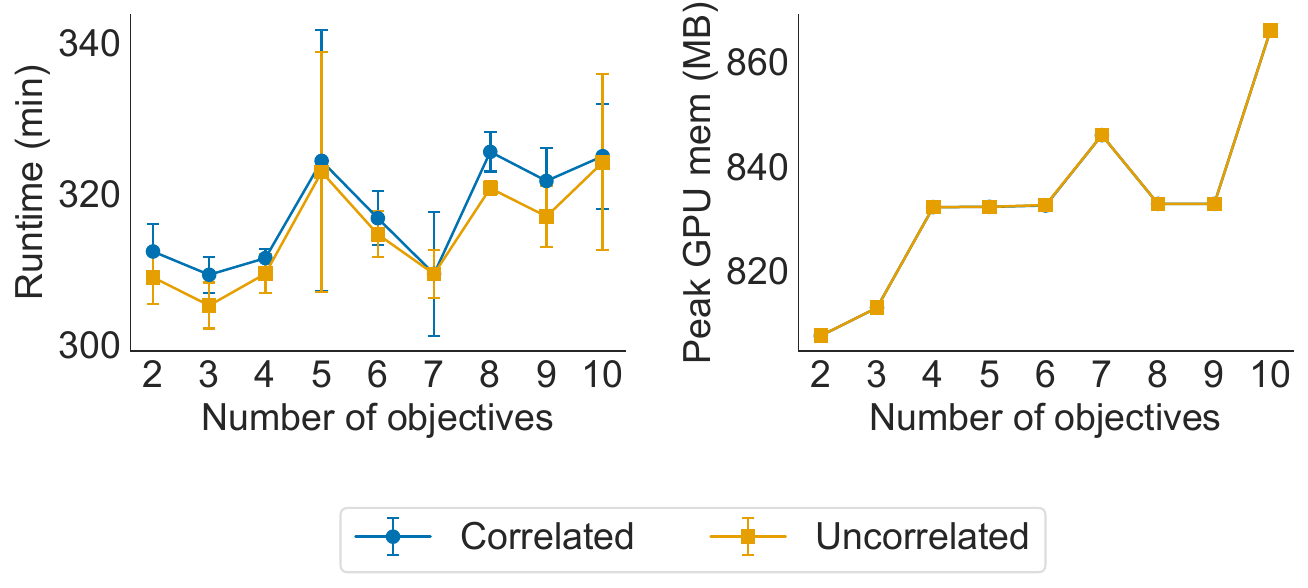}
    \caption{Ablation study on CelebA: runtime and peak memory for increasing numbers of objectives, for correlated and uncorrelated features.}
    \label{fig:runtime}
\end{wrapfigure}
which is generally the solution of interest in MTL. Mean accuracy and improvement over the single task baseline over 5 seeds are reported in Figure~\ref{fig:celeba}.
Note that for improved readability, some attribute names in Figure~\ref{fig:celeba} have been abbreviated without altering their meaning.\\
Figure~\ref{fig:runtime} reports runtime and peak memory for correlated and uncorrelated features. We can see that memory is identical for both settings and only slightly increases with the numbers of objectives. While small differences in runtime exist between 
correlated and uncorrelated settings, the overall runtime 
remains roughly constant across the number of objectives.

\section{Extension to transformer-based architectures}
\label{app:transformer}
To explore the applicability of our method beyond MLPs and CNNs, we additionally conducted experiments on transformer-based architectures. 
To this end, we evaluate MOSEL on fairness-aware text classification using DistilBERT~\citep{sanh2019distilbert} on a 100k-sample subset of CivilComments~\citep{jigsaw-unintended-bias-in-toxicity-classification}, optimizing toxicity prediction (BCE) and fairness (DEO) under linear scalarization with Dirichlet parameter $\alpha=0.5$. DistilBERT has six Transformer blocks, and we evaluate three candidate split points.

MOSEL naturally extends to this Transformer architecture without any architectural modifications, achieving consistent Pareto-front quality across three different split points (HV 0.419–0.423, standard deviation $\leq$ 0.011). These results emphasize, that MOSEL does not rely on architecture-specific decompositions, but instead leverages the modular structure common to modern neural networks by partitioning them at their natural block boundaries (e.g., residual stages or Transformer blocks). Extending this empirical validation to foundation vision-language models and LLMs is an exciting direction for future work.

\newpage
\section{Convergence analysis}
To verify that preference conditioning does not introduce optimization instability, we compare the convergence behavior of MOSEL-LS against an equal weight baseline on Multi-MNIST. Both methods use an identical optimizer and learning rate schedule (standard Adam). For each epoch
we record the maximum loss across all sampled preference vectors, capturing worst-case
optimization behavior. Results are reported in Table~\ref{tab:convergence} as means
and standard deviations over 5 seeds.

Both methods exhibit nearly identical convergence profiles: a rapid descent during the first 10 epochs followed by a stable plateau, with no signs of divergence or increasing instability. After the initial transient, MOSEL maintains a small, constant gap in maximum loss relative to the baseline ($\Delta \approx +0.09$ from epoch 10 onward).
This gap is expected, as MOSEL repeatedly optimizes under extreme preference vectors (encouraged by our choice of $\alpha < 1$) in order to recover a diverse Pareto front, rather than a single fixed weighting. Despite this more challenging optimization setting, MOSEL achieves tighter cross-seed variance than the baseline ($\pm 0.02$ vs.\ $\pm 0.04$ at convergence), and the constant magnitude of $\Delta$ after epoch 10 confirms that the additional complexity introduced by preference conditioning does not compound over training.
These results demonstrate that MOSEL preserves the optimization stability of conventional single-objective training while successfully learning to cover the Pareto front.

\begin{table}[h]
    \centering
    \caption{Per-epoch maximum objective loss (mean $\pm$ std over 5 seeds). $\Delta$ is defined as difference between MOSEL and Equal Weight baseline; negative values indicate lower (better) worst-case loss for MOSEL. After an initial transient, MOSEL maintains a small, constant gap with tighter cross-seed variance, confirming stable optimization throughout training.}
    \label{tab:convergence}
    \begin{tabular}{cccc}
        \toprule
        Epoch & MOSEL-LS Max & Equal Weight Max & $\Delta$ Max \\
        \midrule
        0 & 5.54 $\pm$ 0.86 & 15.89 $\pm$ 5.91 & -10.35 \\
        10 & 0.52 $\pm$ 0.03 & 0.42 $\pm$ 0.04 & +0.09 \\
        30 & 0.38 $\pm$ 0.02 & 0.30 $\pm$ 0.05 & +0.08 \\
        50 & 0.36 $\pm$ 0.02 & 0.28 $\pm$ 0.05 & +0.08 \\
        70 & 0.37 $\pm$ 0.01 & 0.28 $\pm$ 0.05 & +0.09 \\
        99 & 0.36 $\pm$ 0.02 & 0.27 $\pm$ 0.04 & +0.09 \\
         \bottomrule
    \end{tabular}
\end{table}

\newpage

\end{document}